\documentclass[10pt]{article}
\usepackage[preprint]{tmlr}
\usepackage{amsmath,amssymb,amsthm,booktabs,tabularx,array,graphicx,float,longtable}
\usepackage{microtype,xurl}
\usepackage[hidelinks]{hyperref}
\newtheorem{proposition}{Proposition}
\hypersetup{pdfauthor={Yupeng Yao},pdfsubject={Preprint draft}}
\hypersetup{pdftitle={Evaluating Trajectory Features for Routing Final-Layer Attention}}
\title{Evaluating Trajectory Features for\\Routing Final-Layer Attention}
\author{Yupeng Yao}\begin{document}\maketitle
\begingroup
\renewcommand{\thefootnote}{}
\footnotetext{Generative AI tools were used to assist with information retrieval and language polishing during manuscript preparation.}
\addtocounter{footnote}{-1}
\endgroup
\begin{abstract}
Attention routing requires a signal that predicts the value of attention on the current prefix. We evaluate whether hidden-state extrapolation error, curvature and error change improve this prediction beyond uncertainty, one-step displacement, position and state projections. Paired executions of the final attention layer supply signed next-token loss differences in frozen SmolLM3-3B-Base and Qwen3.5-4B-Base checkpoints. Utility-supervised routers are tested on 100 held-out PG-19 books at an identical causal 20\% invocation quota. None of six prespecified comparisons shows a positive gain after familywise correction. In Qwen3.5, a parameter-matched fixed-projection control lowers NLL by 0.00356 nats/token relative to the trajectory router (95\% interval 0.00218--0.00487). Secondary results depend on the operation removed, feature location and scoring horizon; frozen thresholds also drift substantially at longer horizons. Actual selected-query execution yields small long-sequence latency reductions with increased NLL, while learned routers remain slower during cached continuation. The study identifies limits on the incremental value of these trajectory summaries and separates allocation quality from measured inference benefit.
\end{abstract}

\section{Introduction}
Attention is useful when its output improves a prediction, but predictive uncertainty alone does not identify such positions. A model can remain uncertain after reading the available context, or confidently predict the wrong continuation before retrieving a relevant fact. A routing signal should therefore be evaluated against the change produced by a specified operation on the same prefix. Comparing signals under different supervision or invocation rates makes that question harder to answer.

Recent hidden-state trajectories are a plausible routing signal. Extrapolation error measures departure from a line fitted to preceding representations, while curvature measures a change in their direction. Associations between trajectory error and human reading costs beyond observed-word surprisal motivate this hypothesis \citep{barenholtz2026trajectory}. They do not determine whether the same statistics predict attention's effect on language-model loss. A useful comparison must also distinguish their contribution from simpler displacement measures and information in the current state.

We test this incremental contribution with paired final-layer interventions in two public base checkpoints. Every router learns from the same signed attention utilities and uses the same invocation quota. Scalar and fixed-projection controls match the trajectory router's head capacity; a second comparison adds trajectory summaries to a learned state projection. The primary experiment was specified and the fits frozen before modern test utilities were collected. It finds no positive gain in the six familywise-corrected comparisons. Secondary experiments examine why results change with the utility target, context length and feature location. Finally, selected-query measurements assess the quality and latency produced by the executed policies, including routing overhead. Our contribution is an empirical test of routing information under controlled interventions and budgets, with the full failure record retained.

\section{Related work}
\paragraph{Adaptive computation.}
Confidence-based adaptive language modelling varies decoding depth, while Mixture-of-Depths learns which tokens participate in a layer \citep{schuster2022calm,raposo2024mixture}. AMOR uses entropy to invoke attention in recurrent--attention hybrids \citep{zheng2026amor}. These methods establish different ways to allocate computation. We hold the backbone and optional operation fixed to compare the information supplied to a utility-supervised router. This experiment evaluates feature value within that setting, rather than comparing the end-to-end architectures.

\paragraph{Context and memory selection.}
Compressive Transformer studies memory for long-range modelling, and H$_2$O and SnapKV select retained attention states using different importance signals \citep{rae2019compressive,zhang2023h2o,li2024snapkv}. Retention changes which states future queries can read. Our primary intervention instead enables or bypasses the final attention computation while keeping its inputs and available keys fixed. An auxiliary local32 intervention changes the direct reads of that final layer. Historical memory-selection experiments are reported in the appendix; they address a separate selection target.

\section{Paired attention utility and routing}
\label{sec:method}
\subsection{A fixed final-layer intervention}
For each teacher-forced prefix, the preceding trunk produces a contextualized incoming state $h_t$. Let $A_t$ denote the final attention residual, $M$ its tokenwise feed-forward network and $N_2$ the original post-attention normalization. The bypass and enabled branches are
\begin{equation}
z_t^{(0)}=h_t+M(N_2(h_t)),\qquad
z_t^{(1)}=h_t+A_t+M(N_2(h_t+A_t)).
\end{equation}
Both use the original output normalization and vocabulary head. For the same next target $y_t$, define $\ell_t^{(a)}=-\log p_t^{(a)}(y_t)$ and the signed utility
\begin{equation}
U_t=\ell_t^{(0)}-\ell_t^{(1)},\qquad
\mathcal L(g)=\mathcal L(0)-\frac{1}{N}\sum_{t=1}^N g_tU_t.
\label{eq:paired}
\end{equation}
Negative utilities are retained. The loss identity holds for binary gates on fixed prefixes because final-layer outputs do not change later queries' shared inputs or keys. It measures the optional final operation in an otherwise contextualized model. Figure~\ref{fig:design} summarizes the intervention and evaluation.

\begin{figure}[t]
\centering\includegraphics[width=\linewidth]{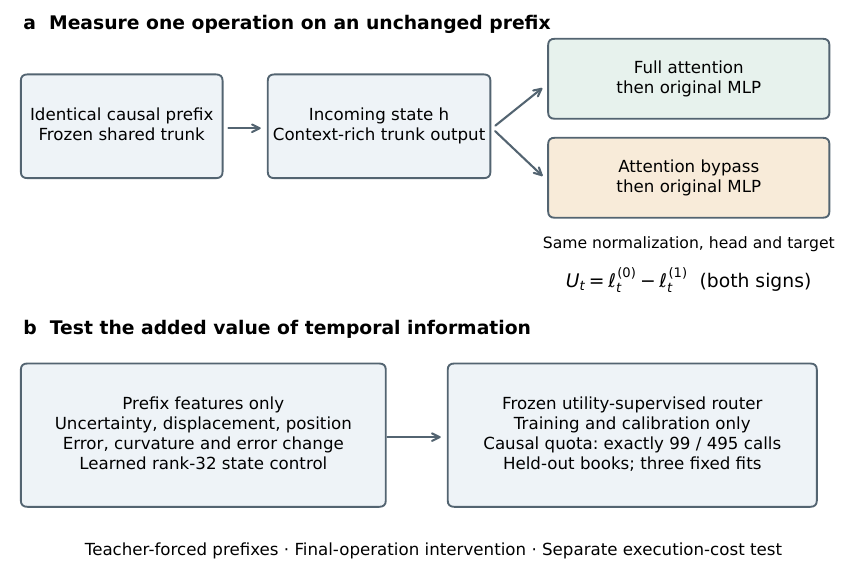}
\caption{Paired final-attention utility and controlled routing. Both branches receive the same preceding-trunk output and retain their corresponding final feed-forward computation. Primary features use the bypass state. Every compared router receives the same signed utility supervision. The prefix-only quota uses frozen calibration thresholds and allocates 99 calls among 495 scored positions. Incoming-state routing is a separate secondary experiment.}
\label{fig:design}
\end{figure}

\subsection{Features and matched controls}
Primary features are computed from $z_t^{(0)}$ before the next target is observed. Normalize the state as $\bar z_t=z_t/\sqrt{d^{-1}\|z_t\|^2+\epsilon}$, with $\epsilon=10^{-8}$. A line fitted to the preceding four states extrapolates $\widehat z_t=\sum_{j=1}^{4}\beta_j\bar z_{t-5+j}$, with $\beta=(-1/2,0,1/2,1)$. We use relative error $E_t=\|\bar z_t-\widehat z_t\|/(\|\bar z_t\|+\epsilon)$, one-step displacement $D_t=\|\bar z_t-\bar z_{t-1}\|/(\|\bar z_t\|+\epsilon)$, curvature $C_t=1-\cos_\epsilon(v_t,v_{t-1})$ for $v_t=\bar z_t-\bar z_{t-1}$, and error change $A_t=E_t-E_{t-1}$. Cosine uses independently epsilon-clamped norms. Entropy $H_t$ and top-two probability margin $M_t$ come from the bypass distribution; position $P_t$ is the zero-indexed input position divided by 512.

Table~\ref{tab:routers} defines the primary routers. C48, C\_RP3 and S32 already include displacement, so the tested increment is specifically $E_t,C_t,A_t$. C\_RP3 adds three fixed orthogonal current-state projections, with projection seed 127. S32 and S32\_T instead learn a biasless rank-32 projection of the entire normalized current state. Their projection sizes and initializations are paired, while their heads differ by 18 parameters. All routers use a linear--GELU--linear head and regress the same signed utility.

\begin{table}[t]
\centering
\caption{Primary feature comparisons. $X=(H_t,M_t,D_t,P_t)$ and $T=(E_t,C_t,A_t)$. The three comparisons per checkpoint are J32--C48, J32--C\_RP3 and S32\_T--S32. Fixed projections have no trainable weights.}
\label{tab:routers}
\begin{tabularx}{\linewidth}{lXrr}
\toprule
Router & Inputs & Head & Projection\\
\midrule
C48 & $X$ & 289 & 0\\
J32 & $X,T$ & 289 & 0\\
C\_RP3 & $X$ and three fixed state projections & 289 & 0\\
S32 & $X$ and learned rank-32 state projection & 1,331 & $32d$\\
S32\_T & $X,T$ and learned rank-32 state projection & 1,313 & $32d$\\
\bottomrule
\end{tabularx}
\end{table}

For each specification, seeds 901, 902 and 903 fit learning rates 0.0005 and 0.002 for 2,000 AdamW updates, with batch 1,024 and weight decay 0.01. Training-only feature and target scaling is frozen before calibration. Calibration loss selects checkpoints and a learning rate; calibration-score quantiles set thresholds. No test utilities enter this process.

\subsection{A common causal quota}
The primary policy allocates $k=\operatorname{round}(0.20N)$ calls within a known scored horizon $N$. With $q$ calls and $r$ positions remaining, it activates when $q=r$, rejects when $q=0$, and otherwise compares the current score with its frozen threshold. Updating $q\leftarrow q-g$ and $r\leftarrow r-1$ guarantees exactly $k$ calls. The rule uses no future score or target, but deadline forcing can select harmful positions. Unforced thresholds are evaluated separately with their actual invocation rates.

\section{Experimental protocol}
\paragraph{Models and data.}
We freeze SmolLM3-3B-Base and the text-only Qwen3.5-4B-Base model, containing 3,075,098,624 and 4,205,751,296 parameters, respectively \citep{smollm3basecard,qwen35basecard}. Their final layers use full attention; no backbone weights are fine-tuned. The PG-19 experiment uses 254 eligible sampled training books, all 50 calibration books and all 100 test books \citep{rae2019compressive,deepmindpg19}. Each tokenizer independently supplies two fixed 512-prediction chunks per book. Excluding the first 17 predictions leaves 495 scored positions per chunk, 99 calls at the primary quota and 99,000 test predictions per checkpoint. Exact revisions, input selection and exclusions appear in Appendix~\ref{app:details}.

\paragraph{Prospective comparisons.}
The protocol fixes features, fits, budgets and six primary contrasts before modern test-label collection. All four model--corpus fitting grids were frozen together. This is a local prospective protocol; the earlier small-model and GPT-2 work remains exploratory. One checkpoint per family and three router fits do not provide independent pretraining replications.

\paragraph{Inference.}
For each contrast, we average $(g_{\mathrm{candidate}}-g_{\mathrm{control}})U$ over the three fixed fits, then weight books equally. Positive gain favors the candidate. A 10,000-draw whole-book bootstrap, seed 217, preserves chunks and fits within a sampled book. We report individual 95\% percentile intervals, simultaneous Bonferroni intervals and two-sided null-centred bootstrap tests with Holm correction across all six primary contrasts. Inference conditions on the two fixed checkpoints and selected fits. Token-weighted descriptive NLL is kept separate from equal-book contrast inference. The full bootstrap definition and secondary analysis families are in the appendix.

\paragraph{Execution checks.}
Primary labels use the SDPA MATH reference after calibration fixtures exposed selected-row discrepancies under automatic SDPA in SmolLM3. The prespecified mean-absolute NLL tolerance remains 0.001 nats/token. Execution measurements include both the native automatic full model and matched MATH full. Saved failures and the numerical amendment are retained. Timing includes the actual feature, router, model and output-head operations; it does not infer speed from call counts.

\section{Results}
\subsection{No reliable gain from additional trajectory features}
Final attention reduced mean PG-19 NLL by 0.03813 nats/token in SmolLM3 and 0.16614 in Qwen3.5. Utility was positive on 48.04\% and 72.06\% of scored predictions, respectively. The negative tails in Figure~\ref{fig:operation} show why average attention benefit alone is insufficient to choose its invocation positions.

None of the six primary comparisons showed a positive gain after Holm correction (Table~\ref{tab:primary}). J32's small favorable means against C48 did not establish an improvement. In Qwen3.5, C\_RP3 achieved lower NLL than J32 despite matching its head capacity and input dimension. Adding $E_t,C_t,A_t$ to the learned state router also gave no reliable gain, with directions changing across fitting seeds. The simultaneous upper bounds for all six contrasts were below the prespecified 0.005-nats/token practical discussion threshold. These results bound large positive increments in the tested setting without establishing exact equivalence.

\begin{table}[t]
\centering
\caption{All six primary contrasts. Gains and individual 95\% whole-book intervals are in $10^{-3}$ nats/token; positive values favor the candidate. Each estimate weights 100 books equally and averages three fixed router fits. $P_H$ is Holm-adjusted over this complete family.}
\label{tab:primary}
\begin{tabular}{llrrr}
\toprule
Checkpoint & Candidate--control & Gain & 95\% interval & $P_H$\\
\midrule
SmolLM3 & J32--C48 & 0.437 & $[-0.138,1.046]$ & 0.602\\
 & J32--C\_RP3 & $-0.277$ & $[-0.997,0.428]$ & 0.843\\
 & S32\_T--S32 & $-0.335$ & $[-0.968,0.295]$ & 0.843\\
Qwen3.5 & J32--C48 & 0.856 & $[-0.020,1.736]$ & 0.282\\
 & J32--C\_RP3 & $-3.557$ & $[-4.874,-2.182]$ & 0.00060\\
 & S32\_T--S32 & 0.426 & $[-0.347,1.221]$ & 0.843\\
\bottomrule
\end{tabular}
\end{table}

S32 achieved lower descriptive NLL than J32 by 0.01327 nats/token in SmolLM3 and 0.01068 in Qwen3.5 at the same quota. Its larger projection budget prevents treating this comparison as capacity-matched evidence about trajectory features. It does show that scalar-only baselines can miss a useful current-state control. Raw high-error priority also performed worse than random quota allocation in both checkpoints; large extrapolation error by itself was not a successful routing rule.

\begin{figure}[t]
\centering\includegraphics[width=\linewidth]{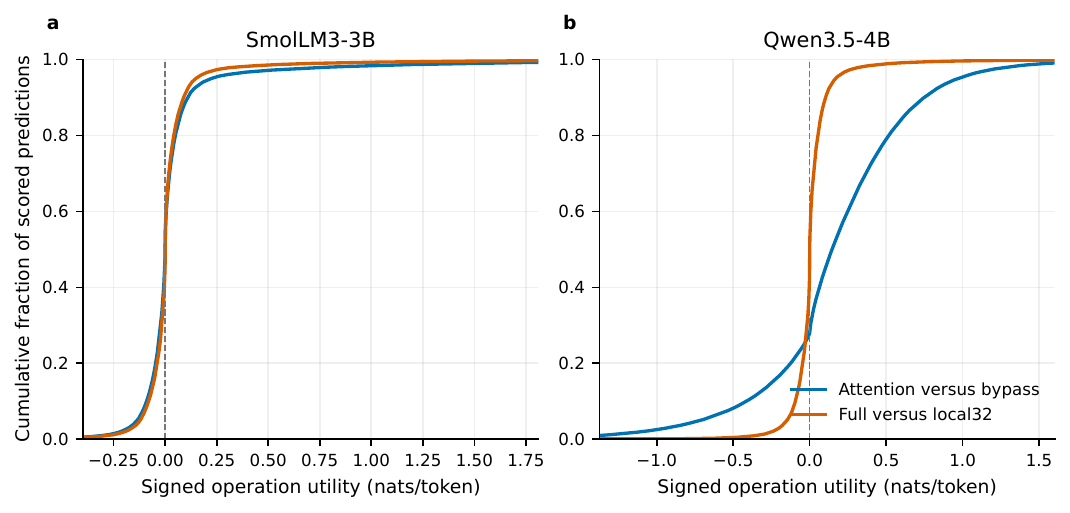}
\caption{Signed utility of the final attention operation. Each empirical distribution uses all 99,000 scored PG-19 predictions. Blue compares full attention with bypass; orange compares full with local32 final attention. Negative utilities are retained. Display limits clip the outer 0.5\% of pooled values per checkpoint; complete utilities are supplied in the review data. Earlier layers remain contextualized. Tokens describe the distributions; books are the inferential units.}
\label{fig:operation}
\end{figure}

\subsection{The utility target changes the comparison}
Restricting final attention to its current and preceding 31 positions gives a different target: $U_t^{\mathrm{remote}}=\ell_t^{(\mathrm{local32})}-\ell_t^{(1)}$. Full attention improved mean NLL over local32 by 0.01723 nats/token in SmolLM3 and 0.01032 in Qwen3.5. Local32 recovered 54.8\% and 93.8\% of the corresponding full-versus-bypass mean gain. This decomposition concerns direct reads by the final operation; its incoming state already contains contextual information.

Separately fitted routers for this target gave small, mixed trajectory increments. J32--C48 was 0.00062 nats/token in SmolLM3 and $-0.00018$ in Qwen3.5. The learned-state additions were 0.00009 and 0.00034. These secondary estimates, with full intervals in the appendix, show that removing final attention and withholding only its direct distant reads need separate evaluations.

\subsection{Thresholds and feature effects change at longer horizons}
At the nominal 20\% threshold, unforced J32 activation rose from 20.46\% to 86.86\% in SmolLM3 and from 18.71\% to 35.55\% in Qwen3.5 when scoring extended from 512 to 4,096 predictions (Figure~\ref{fig:boundaries}). Position remains divided by 512 and moves outside its fitting range. Such rate drift makes unforced NLL changes unsuitable as matched-budget evidence.

The quota-controlled shift also changed feature rankings. At 4,096 predictions, J32--C48 was $-0.00007$ nats/token in SmolLM3 and 0.00174 in Qwen3.5, while S32\_T--S32 was $-0.00127$ and $-0.00293$. The 99 eligible midpoint sequences come from the same test books and can overlap the core chunks. They are a horizon-shift analysis, not an independent book cohort.

Changing the feature location changed this long-context result. Incoming-state routers avoid computing the bypass feed-forward network and vocabulary head before gating. Adding trajectory summaries to their learned state control gave descriptive bypass/full gains of 0.00057 nats/token in SmolLM3 (unadjusted 95\% interval 0.00031--0.00083) and 0.00231 in Qwen3.5 (0.00171--0.00294). The same additions gave negative direct distant-read gains of $-0.00088$ and $-0.00065$. Favorable secondary means therefore depend on both feature location and the operation whose utility is predicted.

Separately refitted WikiText-2 routers supplied a second corpus check \citep{merity2017pointer}. J32--C48 and the learned-state additions had individual intervals spanning zero in both checkpoints. The reused article partitions and refitting protocol are documented in the appendix. All analyses in this subsection are secondary; they are kept separate from the six primary tests.

\begin{figure}[t]
\centering\includegraphics[width=\linewidth]{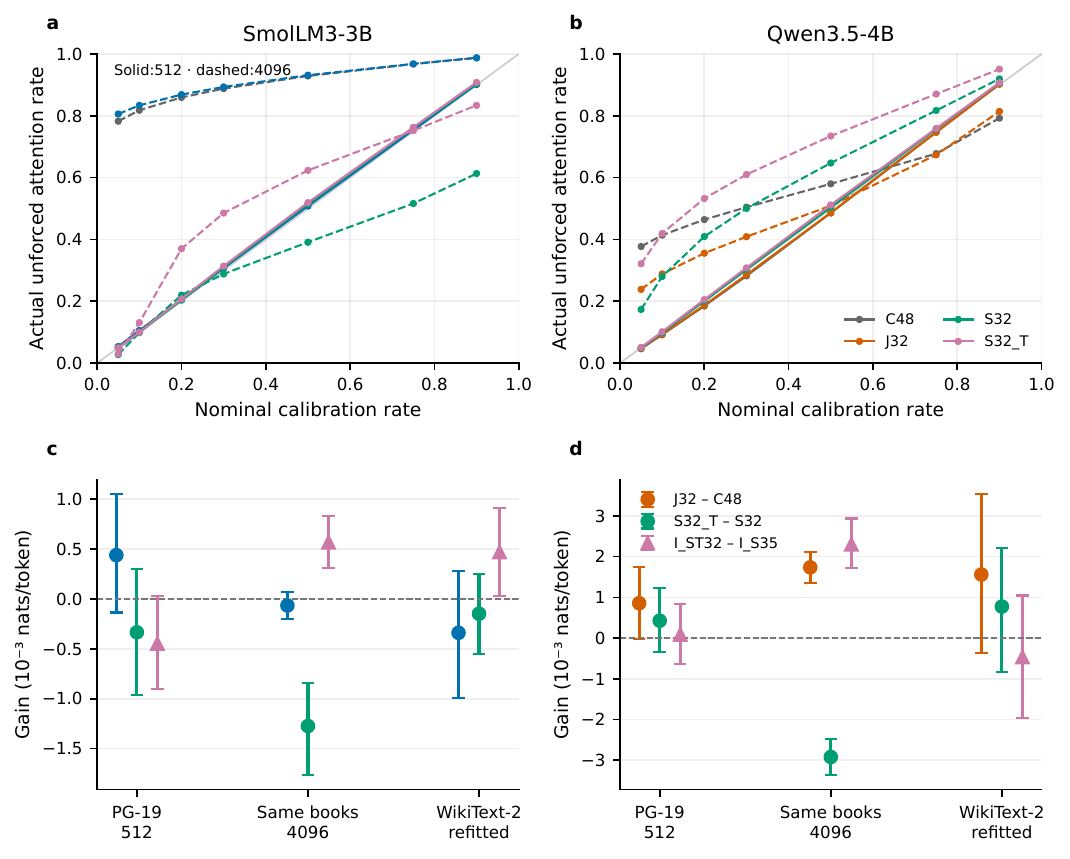}
\caption{Threshold drift and secondary feature comparisons. Top panels compare actual unforced invocation rates with their nominal calibration rates at 512 (solid) and 4,096 (dashed) predictions. Bottom panels show 20\%-quota contrasts, including incoming-state additions (triangles), with individual unadjusted 95\% book/article-bootstrap intervals. PG-19 has 100 core books and 99 eligible books in the overlapping horizon shift; WikiText-2 has 45 SmolLM3 and 46 Qwen3.5 test articles. Token NLL is not pooled across tokenizers.}
\label{fig:boundaries}
\end{figure}

\subsection{Selected-query execution trades quality for limited latency gains}
We measured scoring at 512, 1,024 and 4,096 predictions and 64 cached forced-input computations after 512- or 1,024-token prompts. Sequence-scoring conditions used five warmups; all conditions used 30 randomized paired blocks, with fitting seed 901. Cached continuation followed independent cache setup without a continuation warmup. Scoring quality was collected from each policy's executed forward pass on the full locked roster, separately from its 30 timing sources. Cached quality uses the timed continuation sources. Figure~\ref{fig:execution} pairs each latency with the loss produced by that policy and backend.

Incoming-state routers showed no consistent latency improvement at 512 and 1,024 predictions. Bypass-state J32 was slower at 512, with native-full/policy median-time ratios of 0.863 in SmolLM3 and 0.827 in Qwen3.5. This prototype computes a bypass feed-forward/head pass for features, then recomputes final feed-forward and head operations on all rows. It does not reuse the unselected-row outputs. The measured overhead therefore includes avoidable implementation work.

At 4,096 predictions, incoming-state I\_S35 had native-relative speed ratios of 1.043 in SmolLM3 (descriptive 95\% paired-block interval 1.013--1.060) and 1.029 in Qwen3.5 (1.024--1.050). Its executed NLL increased by 0.00925 and 0.11014 nats/token, respectively. In SmolLM3, matched MATH full already had a native-relative ratio of 1.030; I\_S35's additional ratio against MATH was 1.012. No quality-acceptance margin was prespecified.

All learned routers were slower than native full and had higher NLL in the four cached conditions. After 1,024-token prompts, J32 ratios were 0.898 in SmolLM3 and 0.894 in Qwen3.5. Cached setup, time to first token and the first prompt-to-continuation prediction are outside this measurement. Keys and values are still written at skipped query steps so later reads retain their history.

Of 100 planned policy--conditions, 99 passed preflight and supplied 2,970 timed executions. SmolLM3 I\_S35 at length 1,024 failed the unchanged 0.001 mean-absolute NLL tolerance with discrepancy 0.001096 and was not timed. Measurements used shared Windows desktop graphics; the installed Qwen stack used reference PyTorch convolution and linear-attention fallbacks. These conditions and the retained failures accompany the full timing records.

\begin{figure}[t]
\centering\includegraphics[width=\linewidth]{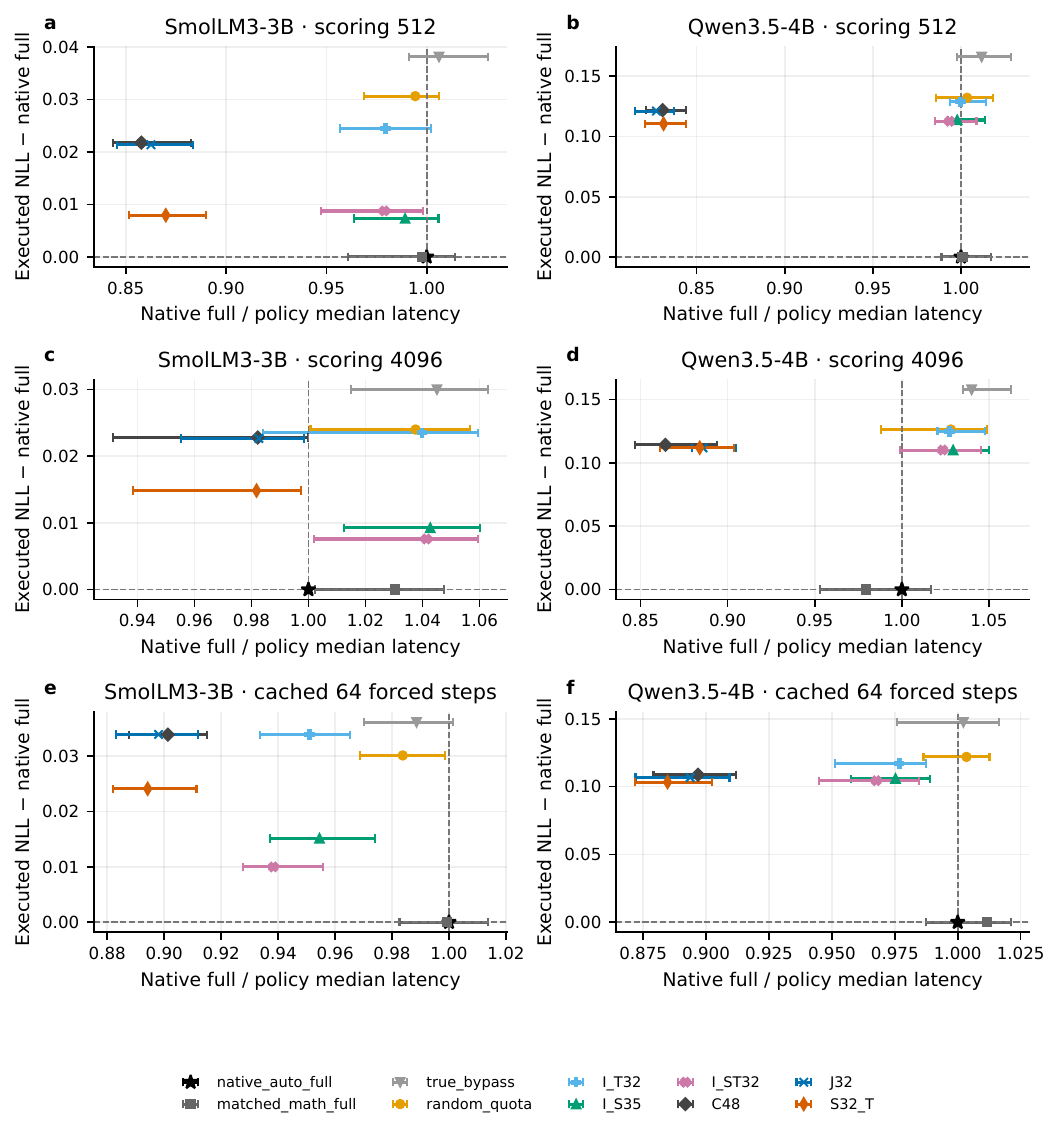}
\caption{Executed quality and latency. Columns are checkpoints; rows show scoring at 512 and 4,096 predictions and 64 cached forced-input computations after a 1,024-token prompt. Horizontal values divide native automatic full median latency by policy latency; values above one indicate faster execution. Intervals are descriptive, unadjusted 95\% paired-bootstrap intervals from 30 blocks. Vertical values are executed policy NLL minus native full NLL. Scoring quality uses the complete locked roster; cached quality uses the 30 timed sources. Matched MATH full separates backend changes from query omission. Routing overhead is included; cached setup and the first prompt-to-continuation prediction are excluded.}
\label{fig:execution}
\end{figure}

\section{Discussion}
The tested extrapolation and curvature summaries did not add a reliable primary benefit beyond displacement, uncertainty, position and state-projection controls. The fixed-projection comparison is particularly informative: in Qwen3.5, it improved allocation over J32 at the same head capacity. The learned-state comparison asks a complementary question about additional information under a larger common projection budget. Together, the comparisons show why an entropy-only baseline would give an incomplete assessment of a trajectory signal.

The results also make the utility target explicit. Direct distant reads by the final layer explain little of Qwen3.5's average full-versus-bypass gain, while its earlier layers remain contextualized. Different feature locations can give opposing secondary increments for bypass/full and local/full utilities. A routing feature therefore needs evidence for the operation it will control; a reading-time association or favorable result under another intervention cannot supply that evidence.

Several boundaries affect generalization. Rank-32 state projections and a fixed fitting recipe are limited learners; the primary results do not establish conditional independence of all trajectory information. One fixed checkpoint per family and an overlapping horizon-shift cohort cannot identify a population effect or isolate parameter scale from architecture and pretraining. The deadline quota controls invocation counts but assumes a known horizon. Its forced final decisions and the observed threshold drift leave unknown-length serving unresolved. Teacher-forced fixed prefixes also exclude feedback from generated tokens.

Execution adds a separate engineering constraint. Bypass features repeat work that an optimized implementation could reuse, and the preceding trunk remains active. The small long-sequence latency reductions came with higher NLL and partly reflected backend differences; cached continuation showed no learned-router speed benefit. Testing a broadly useful efficient architecture would require end-to-end quality and latency evidence with optimized kernels and an explicit quality criterion.

The appendix retains the earlier small-model, GPT-2 and engineered retention studies, including failed configurations and subsequent repairs. The retention instrument has at most 16 eligible VALUE states, making its nominal budget of 32 nonbinding. Its eligibility-aware result diagnoses how a budget is spent in that instrument; it does not establish an optimal future-memory selector. This supporting record is separate from the modern primary comparison.

\section*{Data and code availability}
PG-19 is available from \url{https://github.com/google-deepmind/pg19}; WikiText-2 is described by \citet{merity2017pointer}. The cited model cards identify the pinned checkpoints. The accompanying \texttt{CURVE\_LM\_reproduction\_lite.zip} supplies code, protocols, input manifests, primary paired-loss and score subsets, figure source data, per-sequence synthetic outcomes, benchmark timing records and compact reanalysis instructions. It distinguishes results that can be checked from recorded data from experiments requiring model reacquisition and fresh fitting. The complete internal archive additionally retains intermediate state caches and all candidate fits. No public repository accession or code DOI has been assigned. Third-party dependencies retain their original licences.

\clearpage\bibliographystyle{tmlr}\bibliography{references}
\clearpage
\appendix
\renewcommand{\thefigure}{A\arabic{figure}}\setcounter{figure}{0}
\renewcommand{\thetable}{A\arabic{table}}\setcounter{table}{0}
\renewcommand{\theHfigure}{appendix.\arabic{figure}}
\renewcommand{\theHtable}{appendix.\arabic{table}}
\section{Implementation and inference details}\label{app:details}

\paragraph{Study design and prospective boundary.}
The primary experiment tests whether temporal trajectory statistics improve allocation of final attention beyond controls that already contain one-step displacement, uncertainty, position and current-state projections. The data roster, features, fitting seeds, learning-rate grid, quota rule and six primary contrasts were locked before pretrained-model test utilities were collected. This was a local prospective protocol, not external preregistration. All four model--corpus fits, including scalers, selected checkpoints and calibration thresholds, were sealed on 6 October 2026 at 03:36:48 UTC. The seal records 240 selected controllers and 480 retained learning-rate candidates and verifies that no modern test label bank existed. Prior small-model and GPT-2 experiments remain historical exploratory evidence. Secondary WikiText-2 uses separately fitted routers and the earlier article partitions, so is not a fresh independent corpus. Failed and incomplete measurements remain identified in the run-status ledger.

\paragraph{Checkpoints and corpus.}
We use the text-only causal model from \texttt{Qwen/Qwen3.5-4B-Base}, revision \texttt{1001bb4d826a52d1f399e183466143f4da7b741b}, and \texttt{HuggingFaceTB/SmolLM3-3B-Base}, revision \texttt{d78a42f79198603e614095753484a04c10c2b940} \citep{qwen35basecard,smollm3basecard}. The loaded text models contain 4,205,751,296 and 3,075,098,624 parameters, respectively. Their final decoder layers, indices 31 and 35, contain full attention. No backbone weights are fine-tuned. These are one checkpoint per family; three router fits do not constitute independent pretraining replicates. 

PG-19 contains historical books published before 1919, with 28,602 training, 50 validation and 100 test books \citep{rae2019compressive,deepmindpg19}. From the numeric-ID-sorted official training roster, NumPy PCG64 seed 271828 selects 256 books without replacement. Input-only eligibility excludes training books 24610 and 31868 because fixed chunks overlap under at least one tokenizer. All 50 calibration and 100 test books remain eligible, with no replacements. Each tokenizer independently supplies two 513-token sequences per book at starts $\lfloor f(n-513)\rfloor$, $f\in\{0.25,0.75\}$, giving 512 predictions each. Excluding positions 0--16 leaves 495 scored predictions and exactly 99 calls per chunk at 20\%. The test endpoint has 99,000 scored positions in 100 book clusters. Book IDs are disjoint across partitions, and normalized-text hashes showed no overlap across partitions or with the earlier corpus. These checks do not establish semantic deduplication or absence from checkpoint pretraining.

WikiText-2 uses article-local, nonoverlapping 513-token sequences in the earlier partitions. PCG64 seed 917 samples at most 128 chunks per router-training, calibration and test role. Article counts and exact selected chunks are recorded separately for each tokenizer.

\paragraph{Paired final-operation intervention.}
The official preceding trunk runs once with teacher-forced inputs and \texttt{use\_cache=False}. We capture the incoming hidden state $h_t$ and the final decoder's original positional arguments, mask and keyword arguments. Write $A_t(h_{\leq t})$ for its attention residual and $M$ for its tokenwise feed-forward network. The final states are
\[
 z_t^{(0)}=h_t+M(N_2(h_t)),\qquad
 z_t^{(1)}=h_t+A_t+M(N_2(h_t+A_t)),
\]
where $N_2$ is the official post-attention normalization and attention includes its input normalization. The unchanged output normalization and vocabulary head yield $p_t^{(a)}$. Both branches retain their corresponding MLP; removing attention after the full MLP would define a different intervention.

Qwen3.5 replay preserves its split query/output-gate projection, query/key head normalization, rotary convention, grouped-query attention and sigmoid output gate. SmolLM3 replay preserves the captured layer-specific positional semantics; its final layer's executed \texttt{use\_rope} flag is false. Selected queries retain original token coordinates in their causal masks. Gating the final operation leaves no downstream stateful mixer whose recurrent state could differ between branches. Earlier globally or recurrently contextualized layers remain active.

For next target $y_t$, let $\ell_t^{(a)}=-\log p_t^{(a)}(y_t)$ and $U_t=\ell_t^{(0)}-\ell_t^{(1)}$. We retain negative utility. The term counterfactual refers to alternative internal executions on the same prefix, not observational identification of an unseen treatment outcome. For a binary gate and fixed prefixes,
\[
 \mathcal L(g)=\mathcal L(0)-N^{-1}\sum_{t=1}^N g_tU_t.
\]
This identity conditions on unchanged states and available keys. It does not identify a free-generation effect when changed tokens alter later prefixes.

\paragraph{Numerical integrity.}
Calibration-only fixtures check official full and bypass replay, future-token mutation invariance, binary endpoint identities, local/full row selection and actual selected-query equivalence at lengths 64 and 512. The prespecified mean absolute NLL discrepancy tolerance is 0.001 nats/token. The automatic SDPA path failed selected-row parity on SmolLM3 calibration inputs. The same SDPA MATH backend on official full and selected execution met this tolerance, so primary paired labels use that reference consistently. Reduced-precision FP16/BF16 GEMM reductions are disabled. Failed automatic-backend observations remain recorded as a pre-test numerical amendment. Head calculations use 128-token tiles with FP32 probability reductions. A speed comparison must also retain native automatic-backend full execution; the slower numerical reference alone is insufficient as a systems comparator.

\paragraph{Prefix features and feature location.}
Primary features use the bypass state $z_t^{(0)}$. With $\epsilon=10^{-8}$, define $\bar z_t=z_t/\sqrt{d^{-1}\|z_t\|^2+\epsilon}$. A least-squares line through the preceding four normalized states extrapolates
\[
 \widehat z_t=\sum_{j=1}^4\beta_j\bar z_{t-4+j-1},\qquad
 \beta_j=\tfrac14+\frac{6[j-5/2]}{12}.
\]
The current state is excluded from this fit. Features are relative extrapolation error $E_t=\|\bar z_t-\widehat z_t\|/(\|\bar z_t\|+\epsilon)$, relative displacement $D_t=\|\bar z_t-\bar z_{t-1}\|/(\|\bar z_t\|+\epsilon)$, curvature $C_t=1-\cos_\epsilon(v_t,v_{t-1})$, and error change $A_t=E_t-E_{t-1}$, with $v_t=\bar z_t-\bar z_{t-1}$. Cosine divides the dot product by independently epsilon-clamped vector norms. Zero velocity therefore gives curvature one. Position is the zero-indexed token index divided by 512. Entropy and top-two probability margin use the bypass distribution. Every decision uses information through the current input token; the next target is excluded.

A separately reported secondary feature-location experiment computes trajectory statistics and learned state projections from $h_t$, before final attention, feed-forward processing and the vocabulary head. Its three controllers are trajectory plus position (\texttt{I\_T32}), learned current state plus displacement and position (\texttt{I\_S35}), and that learned state plus error, curvature and error change (\texttt{I\_ST32}). These gates avoid a pre-gate vocabulary head. They use different information from the primary bypass-state controllers and are reported separately from the primary comparisons.

\paragraph{Matched routers and optimization.}
All primary routers regress identical signed utilities. \texttt{C48} uses entropy, margin, displacement and position with hidden width 48. \texttt{J32} adds error, curvature and error change with width 32. Both have 289 trainable head parameters. \texttt{C\_RP3} uses the four control scalars and three fixed orthogonal projections of the normalized current bypass state, also with 289 parameters. Its projection seed is 127. \texttt{S32} learns a biasless rank-32 projection of the complete normalized current state, combines it with the four control scalars, and uses width 35. \texttt{S32\_T} adds the three trajectory scalars with width 32. Their heads have 1,331 and 1,313 parameters, respectively, in addition to the same $32d$ projection weights. 

The network is linear--GELU--linear. Training-only scalar means, standard deviations and target scaling are frozen before calibration, with scale minima $10^{-6}$. Normalized complete states are cached as FP16 and promoted to FP32 for router fitting and evaluation. Fitting seeds are 901, 902 and 903. Paired state arms share projection initialization and minibatches but learn independently. Each seed fits learning rates 0.0005 and 0.002 with 2,000 AdamW updates, weight decay 0.01 and batch 1,024. A fitting-seed-initialized CPU PyTorch generator samples training positions with replacement. Calibration loss selects a checkpoint every 100 updates, then selects between the two candidates. All candidates and seeds are retained. Secondary \texttt{J32} objectives compare utility MSE with positive-utility binary cross-entropy and $\mathrm{KL}(p^{(1)}\Vert p^{(0)})$ regression. All ten specifications were fitted for both paired targets in each model--corpus combination.

\paragraph{Allocation and secondary context tests.}
Calibration-score quantiles give frozen thresholds. The primary prefix-only policy knows the scored horizon $N$ and allocates $k=\operatorname{round}(0.20N)$ calls. At each position, it rejects if remaining quota $q=0$, activates if $q$ equals remaining positions $r$, and otherwise activates when the current score exceeds its threshold. It then updates $q\leftarrow q-g$, $r\leftarrow r-1$. The invariant $0\leq q\leq r$ guarantees exactly $k$ calls. It is a causal deadline heuristic, not an optimal selector or a natural-EOS policy. Deadline forcing can select harmful positions. Raw entropy, error and displacement prioritize higher values as fixed before test; stable random allocation, always-bypass, always-full and a target-aware retrospective oracle are descriptive diagnostics. The oracle has no deployment or learnability interpretation.

The local-read control replaces only final attention's eligible keys with the current and previous 31 positions and defines $U_t^{\mathrm{remote}}=\ell_t^{(\mathrm{local32})}-\ell_t^{(1)}$. It uses separately fitted \texttt{C48}, \texttt{J32}, \texttt{S32} and \texttt{S32\_T} controllers, frozen before its test labels. To compare targets with feature information held fixed, these routers reuse the same bypass-state features, including bypass-distribution entropy and margin, rather than features from the local32 prediction. Incoming-state secondary controllers likewise retain their incoming features across targets. Preceding layers remain contextualized, so this measures marginal direct distant reads by the final operation. The local eligibility mask defines a semantic intervention; it is not a benchmark of physically sparse sliding-window execution. Unforced threshold gates report actual rates at nominal 10, 20 and 50\%. A fixed 4,096-prediction horizon shift uses midpoint 4,097-token sequences from the same test books; book 31974 is excluded for length, leaving 99. Frozen 512-trained features and thresholds are unchanged, including position values beyond their training range. This overlapping sample is a secondary horizon shift, not an independent long-context benchmark.

\paragraph{Statistical analysis.}
Books, or WikiText-2 articles, are corpus-uncertainty units; tokens, chunks and fits are not independent replicates. For each checkpoint, the primary response is the equal-book mean of $(g_{\mathrm{candidate}}-g_{\mathrm{control}})U$, averaged over the three fixed fits. The six contrasts are \texttt{J32--C48}, \texttt{J32--C\_RP3} and \texttt{S32\_T--S32}, separately in each family on PG-19. Whole-book bootstrap sampling retains every paired chunk, policy and fit within a book. We use 10,000 draws, seed 217, individual percentile 95\% intervals and conservative 99.167\% intervals for simultaneous six-contrast coverage. Two-sided null-centred bootstrap $P$ values are
\[
 P=\frac{1+\#\{b:|\widehat\Delta_b-\widehat\Delta|\geq|\widehat\Delta|\}}{10001},
\]
with Holm correction at familywise $\alpha=0.05$. This is approximate bootstrap inference. Unadjusted interval exclusion is insufficient for a confirmatory claim. Per-fit values, harmful selected utility and positive utility capture accompany the signed endpoint. Intervals condition on the fixed pretrained checkpoints and fitted policies; they do not quantify pretraining or model-family uncertainty. We prespecified 0.005 nats per model-specific token as a practical discussion threshold, motivated by earlier GPT-2 controlled gains, without claiming a power calculation. Fixed sample sizes are not expanded after inconclusive results. Secondary significance claims use within-family correction. Raw token NLLs from different tokenizers are not pooled.

\paragraph{Actual execution and quality.}
The execution protocol fixes fitting seed 901 and compares ten policies: native automatic-backend full, matched MATH full, true bypass, random 20\% quota, incoming-state \texttt{I\_T32}, \texttt{I\_S35} and \texttt{I\_ST32}, and bypass-state \texttt{C48}, \texttt{J32} and \texttt{S32\_T}. All policies other than the native automatic anchor use MATH. Actual selected queries retain the official positional, grouped-query and output-gating semantics. Rejected rows omit attention products and output projection. For uncached true bypass, the final attention and its projections are omitted. During cached execution, final-layer keys and values are written at every step, including skipped queries, so later reads do not lose history. The cached always-bypass continuation retains shared K/V-update cost despite never reading those keys; it is not the cheapest conceivable always-bypass cache implementation. Instrumented parity checks compare actual execution with the corresponding paired reference under the unchanged 0.001 mean absolute NLL tolerance.

For bypass-state controllers, the harness computes the pre-gate bypass MLP, normalization, vocabulary head and entropy/margin, then executes the gated final decoder and output head. The bypass feature MLP is not reused: the final decoder recomputes its MLP even on unselected rows. This avoidable implementation overhead is counted and disclosed rather than assumed absent. Incoming-state controllers use no pre-gate vocabulary head. Native full uses one decoder/head. The timed path includes every executed projection, router, feature calculation, state projection, MLP, normalization, tiled head, FP32 target-NLL reduction, cache operation, synchronization and dispatch. Source reading, tokenization and transfer of token IDs to the device are outside the timed boundary.

Executed quality is attempted from each verified policy's own forward pass on every row of the fixed scoring roster, separately from timing. At 512 predictions, the roster comprises both core chunks from each of 100 PG-19 test books. At 1,024 and 4,096 predictions, it comprises fixed prefix crops of midpoint stress-test sequences from 99 eligible books. Each sequence retains the common 17-position scoring exclusion. Saved losses, gates, book IDs and per-policy completion counts distinguish full-roster exports from incomplete measurements. Quality is not NLL mixed from dense labels or averaged over the 30 timing samples. The 1,024 and 4,096 sets overlap and are secondary evaluations. Numerical-reference quality is not transferred to native automatic execution or another feature-location policy.

\paragraph{Sequence scoring and cached continuation timing.}
The batch-one sequence-likelihood protocol covers 512-, 1,024- and 4,096-prediction forward passes and target losses, using the first 30 locked source rows. It specifies five untimed warmups and 30 paired blocks, with NumPy seed $731091+\mathrm{length}$ randomizing all ten policies and one common source row per block. Arms that fail parity, warmup, memory or execution remain recorded while other prespecified arms are attempted. Partial conditions are not complete ten-policy measurements. The first 17 positions receive no routed attention; nominal and physical query fractions are recorded separately.

Cached timing uses independent dynamic caches prepared from 512- or 1,024-token prompts, followed by 64 supplied input tokens. Cache setup computes the final attention query only at the last prompt position, writes the complete required key/value history, and performs no vocabulary-head evaluation. Setup time is measured separately and excluded from continuation latency. The first continuation input is supplied rather than generated by a scored prompt output: the 64 timed computations predict the next token after each supplied continuation input. The first prompt-to-continuation prediction is therefore excluded. This is neither time to first token nor complete prompt-plus-generation latency. Quota and state history advance once per supplied token, and the continuation endpoint uses its own fixed 64-position deadline without another warmup. Per-block executed continuation losses are retained alongside timing; they do not replace the complete-roster sequence-quality endpoint.

The original command-line benchmark stopped before parity or timing because Windows WDDM reported persistent desktop applications as compute/graphics processes. A platform addendum recorded before any systems measurement fixed a baseline PID inventory. The external orchestration runner imports all numerical operations from the unchanged sealed harness; it allows those baseline desktop PIDs and excludes a whole paired block if a new foreign GPU PID appears before or after it. Discarded blocks remain recorded, with at most five replacements. An unchanged PID does not imply an idle application, and unavailable telemetry remains unknown. These are measurements with one known experimental model process and shared desktop graphics, not verified device isolation. Raw telemetry, the original failed launch, external-runner hashes and per-arm failures are retained.

Synchronization brackets timing. Median and p95 wall-clock latency, tokens/s and peak allocated/reserved memory are reported for verified completed policies. The p95 concerns a complete sequence or 64-step continuation, not per-token latency. Ratios of median times use 10,000 paired block-bootstrap draws, seed 731092, and 95\% intervals against each available full reference. No quality-acceptance or noninferiority margin was prespecified. A ratio interval above one against native automatic full supports a measured latency reduction under the recorded shared-device conditions, but does not establish acceptable quality or general efficiency.

\paragraph{Software and hardware.}
The recorded environment is Windows with Python 3.12.14, PyTorch 2.11.0+cu128, CUDA 12.8, Transformers 5.18.0, tokenizers 0.23.2 and NumPy 2.5.3. The host uses an AMD Ryzen 9 9950X3D and NVIDIA RTX 5080. Checkpoint inference is BF16, with FP32 probability reductions and router parameters/optimization. Executed modelling files and intervention sources are identified by SHA-256 hashes. Only one pretrained checkpoint is resident at a time. Exact device/driver metadata and measurement conditions accompany the benchmark records.

Qwen3.5's recorded executions use the reference PyTorch fallbacks for \texttt{causal\_conv1d\_fn} and \texttt{chunk\_gated\_delta\_rule}, because the optional \texttt{causal-conv1d} and \texttt{flash-linear-attention} packages are absent. Paired arms share this installed stack. No package installation or numerical-source change was made in response to results. Absolute production performance and performance with optimal fused kernels remain untested.

\paragraph{Engineered retention support.}
Fresh generator seeds 730001 and 730002 supply 512 sequences at lengths 256 and 384 for the three frozen, previously repaired associative-recall checkpoints. Their first 128 sequences evaluate retention with recent 16 positions plus nominal budgets 16, 32 or 64. Eligible VALUE counts and selected identities are recorded separately. Eligibility-aware current utility ranks eligible observed states and fills unused nominal slots deterministically. With at most 16 eligible VALUE states, its nominal $B=32$ budget is nonbinding; eligible budgets 4, 8 and 16 are secondary. Accuracy and NLL remain distinct endpoints. Materialized-state masks measure selection quality rather than bounded online storage. Untouched sources do not erase earlier outcome-triggered architecture selection; its repairs are disclosed in Supplementary Information.

\section{Primary comparison intervals}
\begin{figure}[H]
\centering\includegraphics[width=\linewidth]{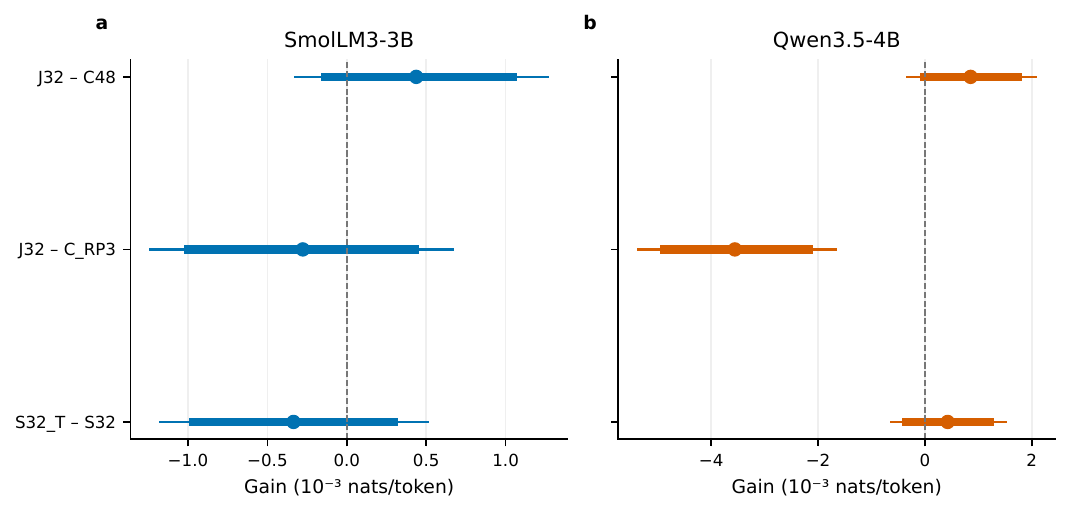}
\caption{Primary contrasts with individual 95\% intervals (thick) and simultaneous 99.167\% Bonferroni intervals (thin). Positive gains favor the additional-trajectory arm. The six-contrast family uses 100 book clusters per checkpoint and three fixed fits. C48 and C\_RP3 already contain one-step displacement.}
\label{fig:primary}
\end{figure}

\section{Supplementary theoretical details}

\paragraph{Information and conditioning.}
Let $\Phi$ contain information available to the router before the next target is observed. For an integrable signed operation utility $U$, define $\mu(\Phi)=\mathbb E[U\mid\Phi]$. A randomized gate has activation probability $q(\Phi)\in[0,1]$ and uses a coin independent of the target conditional on $\Phi$. The tower property then gives expected gain $\mathbb E[q(\Phi)U]=\mathbb E[q(\Phi)\mu(\Phi)]$. Neither an observed positive target utility nor a retrospective oracle establishes that $\mu$ is predictable from the available features.

\begin{proposition}[Expected upper-bound invocation budget]
\label{supp:budget}
For $0\leq\rho\leq1$, maximize $\mathbb E[q\mu]$ over measurable $q\in[0,1]$ satisfying $\mathbb E[q]\leq\rho$. If $\Pr(\mu>0)\leq\rho$, activating precisely the positive-utility set is optimal. Otherwise an optimal policy activates above a positive utility threshold, rejects below it, and randomizes independently on the threshold set to attain rate $\rho$.
\end{proposition}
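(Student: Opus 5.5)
This is a fractional-knapsack (Neyman--Pearson type) argument carried out on the law of $\mu=\mu(\Phi)$. Assume $\mu$ is integrable, as inherited from the integrable $U$ through the tower property stated just above. The plan is to compare any feasible $q$ with the proposed policy $q^\ast$ through a Lagrangian inequality, using the pointwise bound $(q^\ast-q)(\mu-\lambda)\ge 0$ for a suitably chosen multiplier $\lambda\ge 0$. We work directly with $\mathbb E[q\mu]$, so the target $U$ no longer appears.

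\emph{Case $\Pr(\mu>0)\le\rho$.} Take $q^\ast=\mathbf 1\{\mu>0\}$, which is feasible since $\mathbb E[q^\ast]=\Pr(\mu>0)\le\rho$. For any measurable $q\in[0,1]$ we have $q\mu\le \mu^+ = q^\ast\mu$ pointwise. Taking expectations gives $\mathbb E[q\mu]\le\mathbb E[q^\ast\mu]$, with no use of the budget constraint. This case corresponds to $\lambda=0$. (Values with $\mu=0$ can be activated or not without changing the objective, so ``precisely the positive-utility set'' is one optimal choice.)

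\emph{Case $\Pr(\mu>0)>\rho$.} Define $\lambda=\inf\{c\ge 0:\Pr(\mu>c)\le\rho\}$. This is the upper $\rho$-quantile of $\mu$, and it is finite because $\mu$ is integrable. Right-continuity of $c\mapsto\Pr(\mu>c)$ gives $\Pr(\mu>\lambda)\le\rho\le\Pr(\mu\ge\lambda)$.

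The step needing care is showing that $\lambda>0$. Since $\Pr(\mu>0)>\rho$, right-continuity gives $\Pr(\mu>c)>\rho$ for all sufficiently small $c>0$, so $\lambda$ is strictly positive.

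Next set $q^\ast=1$ on $\{\mu>\lambda\}$, $q^\ast=0$ on $\{\mu<\lambda\}$, and $q^\ast=\gamma$ on $\{\mu=\lambda\}$. Choose $\gamma=(\rho-\Pr(\mu>\lambda))/\Pr(\mu=\lambda)$ when the atom is nonzero; otherwise $\Pr(\mu>\lambda)=\rho$ already and $\gamma$ is arbitrary. Then $\gamma\in[0,1]$ and $\mathbb E[q^\ast]=\rho$ exactly. Constant randomization on the threshold set can be realized by an independent coin, as the setup permits; its expectation is the same as the deterministic value $\gamma$, so working with the $[0,1]$-valued $q^\ast$ suffices.

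For feasible $q$, the pointwise inequality $(q^\ast-q)(\mu-\lambda)\ge0$ holds: where $\mu>\lambda$ we have $q^\ast=1\ge q$; where $\mu<\lambda$ we have $q^\ast=0\le q$; and on the threshold set the product is zero. Taking expectations gives
\[
\mathbb E[q^\ast\mu]-\mathbb E[q\mu]\ \ge\ \lambda\bigl(\mathbb E[q^\ast]-\mathbb E[q]\bigr)=\lambda\bigl(\rho-\mathbb E[q]\bigr)\ \ge 0.
\]
The last inequality uses $\lambda>0$ together with the budget constraint $\mathbb E[q]\le\rho$. All expectations are finite because $0\le q\le 1$ and $\mu$ is integrable. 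Hence $q^\ast$ is optimal, and it activates above the positive threshold $\lambda$, rejects below it, and randomizes on $\{\mu=\lambda\}$, as claimed.
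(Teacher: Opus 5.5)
Your argument is the same threshold comparison the paper uses. In the slack case the indicator of $\{\mu>0\}$ pointwise maximizes $q\mu$. Otherwise a quantile threshold with randomization on its atom is compared with any feasible $q$ through $(q^\ast-q)(\mu-\lambda)\ge0$. You are more explicit than the paper about why the quantile exists and why it is strictly positive. However, one step fails as stated.

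Finiteness of $\lambda=\inf\{c\ge0:\Pr(\mu>c)\le\rho\}$ does not come from integrability of $\mu$. It comes from $\rho>0$, because $\Pr(\mu>c)\to0$ as $c\to\infty$ for any real-valued $\mu$. Your second case includes $\rho=0$ with $\Pr(\mu>0)>0$, and there $\lambda$ is the essential supremum of $\mu$. Suppose $\mu$ is unbounded above, for instance exponentially distributed, which is integrable. Then the defining set is empty and $\lambda=+\infty$. Neither $q^\ast$ nor the final bound $\lambda(\rho-\mathbb E[q])$ is defined, since the latter becomes $\infty\cdot0$.

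The repair is the paper's opening line. When $\rho=0$, the constraints $q\ge0$ and $\mathbb E[q]\le0$ force $q=0$ almost surely, so the zero policy is the only feasible one and is trivially optimal. The quantile construction is then needed only for $0<\rho<\Pr(\mu>0)$, and there your argument goes through.

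A smaller point concerns the inequality $\rho\le\Pr(\mu\ge\lambda)$. It does not follow from right-continuity. It follows from $\Pr(\mu>c)>\rho$ for all $c\in[0,\lambda)$, together with continuity of measure along the decreasing events $\{\mu>c\}$ as $c\uparrow\lambda$. That argument requires $\lambda>0$, which you establish only in the following step, so state it after the positivity argument.
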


\begin{proof}
At $\rho=0$, feasibility implies $q=0$ almost surely. If the positive-utility set has mass at most $\rho$, its indicator is feasible and pointwise maximizes $q\mu$. Otherwise select $\tau>0$ with $\Pr(\mu>\tau)\leq\rho\leq\Pr(\mu\geq\tau)$. This quantile exists for $0<\rho<\Pr(\mu>0)$. Let $q^*=1$ for $\mu>\tau$ and zero for $\mu<\tau$. When $\Pr(\mu=\tau)>0$, set
\[
 q^*=\frac{\rho-\Pr(\mu>\tau)}{\Pr(\mu=\tau)}
\]
on the equality set. When it has zero mass, the strict upper set already has mass $\rho$. For any feasible competitor, $(q-q^*)(\mu-\tau)\leq0$ pointwise. Hence
\[
 \mathbb E[(q-q^*)\mu]\leq
 \tau\{\mathbb E[q]-\mathbb E[q^*]\}
 =\tau\{\mathbb E[q]-\rho\}\leq0.
\]
Zero-utility activations can be added without benefit. Negative activations are unnecessary under an upper-bound budget.
\end{proof}

This result is an expected-rate statement. An exact finite quota may compel negative-utility activations, and a prefix-only deadline heuristic cannot consult the complete future score list. The primary policy is consequently not an implementation of the population optimum.

\begin{proposition}[Expected cost budget, sufficient conditions]
\label{supp:cost}
Let $c(\Phi)>0$ be measurable and integrable. Suppose $\lambda\geq0$ and a feasible $q_\lambda$ pointwise maximize $q(\mu-\lambda c)$ over $q\in[0,1]$, with complementary slackness $\lambda\{\mathbb E[q_\lambda c]-B\}=0$. Then $q_\lambda$ is optimal among policies satisfying $\mathbb E[qc]\leq B$.
\end{proposition}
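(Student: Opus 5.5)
The plan is a Lagrangian weak-duality argument, mirroring the threshold comparison in the proof of Proposition~\ref{supp:budget} with the constant $\tau$ replaced by the cost-weighted multiplier $\lambda c(\Phi)$. Fix any measurable competitor $q\in[0,1]$ with $\mathbb E[qc]\leq B$. Integrability of $\mu$, together with $q$, $q_\lambda$ taking values in $[0,1]$, makes $\mathbb E[q\mu]$ and $\mathbb E[q_\lambda\mu]$ finite. Integrability of $c$ does the same for $\mathbb E[qc]$ and $\mathbb E[q_\lambda c]$, so all expectations below can be split linearly.

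\textbf{Step 1: pointwise inequality.} Because $q_\lambda(\Phi)$ maximizes $x\mapsto x(\mu(\Phi)-\lambda c(\Phi))$ over $x\in[0,1]$ at every point, we have
\[
q(\mu-\lambda c)\leq q_\lambda(\mu-\lambda c)
\]
pointwise. Taking expectations gives
\[
\mathbb E[q\mu]-\mathbb E[q_\lambda\mu]\leq\lambda\{\mathbb E[qc]-\mathbb E[q_\lambda c]\}.
\]

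\textbf{Step 2: use the budget and complementary slackness.} Complementary slackness gives $\lambda\,\mathbb E[q_\lambda c]=\lambda B$. The right-hand side of Step~1 therefore equals $\lambda\{\mathbb E[qc]-B\}$. This is at most zero, since $\lambda\geq0$ and $q$ is feasible. Hence $\mathbb E[q\mu]\leq\mathbb E[q_\lambda\mu]$. Since $q_\lambda$ is itself assumed feasible, it is optimal.

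\textbf{Main obstacle.} The only delicate point is measurability and integrability, so that $\mathbb E[q_\lambda(\mu-\lambda c)]$ is well defined and the linear splitting is legitimate. I would handle this by noting that $|q(\mu-\lambda c)|\leq|\mu|+\lambda c$, which is integrable. No existence claim for $\lambda$ is needed, because the proposition assumes it; this is why the statement is phrased as sufficient conditions.

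I would close with two remarks. When $\lambda=0$, feasibility alone suffices: $q_\lambda$ then activates wherever $\mu>0$. On the set where $\mu=\lambda c$, any value of $q_\lambda$ is pointwise optimal, so randomization there can be used to meet the slackness condition, as in Proposition~\ref{supp:budget}.
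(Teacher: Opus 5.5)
Your argument is correct and follows the paper's proof: integrate the pointwise inequality $q(\mu-\lambda c)\leq q_\lambda(\mu-\lambda c)$ to get $\mathbb E[(q-q_\lambda)\mu]\leq\lambda\{\mathbb E[qc]-\mathbb E[q_\lambda c]\}$, then use complementary slackness and feasibility to make the right side nonpositive. The only differences are cosmetic: you handle $\lambda=0$ and $\lambda>0$ together through the identity $\lambda\,\mathbb E[q_\lambda c]=\lambda B$ where the paper splits the cases, and you state the integrability bound $|q(\mu-\lambda c)|\leq|\mu|+\lambda c$ that the paper leaves implicit.
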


\begin{proof}
Pointwise maximization gives
\[
 \mathbb E[(q-q_\lambda)(\mu-\lambda c)]\leq0.
\]
Thus
\[
 \mathbb E[(q-q_\lambda)\mu]\leq
 \lambda\{\mathbb E[qc]-\mathbb E[q_\lambda c]\}\leq0.
\]
For $\lambda=0$ the final inequality is immediate. For $\lambda>0$, complementary slackness and feasibility supply $\mathbb E[q_\lambda c]=B$ and $\mathbb E[qc]\leq B$.
\end{proof}

The policy activates when $\mu-\lambda c>0$, rejects when it is negative, and can randomize at equality. The proposition supplies sufficient conditions; it does not assert multiplier existence for every deterministic hard-budget problem. A FLOP estimate also does not identify a hard latency bound.

\begin{proposition}[Finite prediction regret]
\label{supp:regret}
For conditional utilities $\mu_i$ and predictions $r_i$, let $S^*$ choose at most $k$ largest positive true utilities and $\widehat S$ choose at most $k$ largest positive predictions. If $|r_i-\mu_i|\leq\varepsilon$ for every candidate, then
\[
 \sum_{i\in S^*}\mu_i-\sum_{i\in\widehat S}\mu_i
 \leq2k\varepsilon.
\]
\end{proposition}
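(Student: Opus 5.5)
The plan is an exchange argument that pairs the elements of $S^*$ and $\widehat S$ one-to-one and bounds each paired difference by $2\varepsilon$. Write $S^*=\{$indices of the at most $k$ largest positive $\mu_i\}$ and $\widehat S=\{$indices of the at most $k$ largest positive $r_i\}$. The common indices cancel, so the regret equals
\[
\sum_{i\in S^*\setminus\widehat S}\mu_i-\sum_{j\in\widehat S\setminus S^*}\mu_j .
\]
The symmetric-difference sets need not have equal size, because either set may contain fewer than $k$ elements. I will therefore compare their sizes before constructing a matching.

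The first case is $|S^*\setminus\widehat S|\leq|\widehat S\setminus S^*|$. Take any injection $\pi$ from $S^*\setminus\widehat S$ into $\widehat S\setminus S^*$. Unmatched elements $j\in\widehat S\setminus S^*$ contribute $-\mu_j$. Since $r_j>0$, we have $\mu_j\geq r_j-\varepsilon>-\varepsilon$, so each such term is at most $\varepsilon$.

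For a matched pair $(i,\pi(i))$, note that $i\notin\widehat S$ while $\pi(i)\in\widehat S$. Two sub-cases arise.
\begin{itemize}
\item If $\widehat S$ is full, so $|\widehat S|=k$, then $r_i\leq r_{\pi(i)}$.
\item Otherwise $\widehat S$ contains every positive prediction, so $r_i\leq0<r_{\pi(i)}$.
\end{itemize}
Either way $r_i\leq r_{\pi(i)}$, and hence
\[
\mu_i-\mu_{\pi(i)}\leq(r_i+\varepsilon)-(r_{\pi(i)}-\varepsilon)\leq2\varepsilon .
\]

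The second case is $|S^*\setminus\widehat S|>|\widehat S\setminus S^*|$, which means $\widehat S$ has fewer than $k$ elements. Then $\widehat S$ contains every index with $r>0$. Each unmatched $i\in S^*\setminus\widehat S$ therefore has $r_i\leq0$, and so $\mu_i\leq\varepsilon$. Matched pairs are handled exactly as before.

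Summing the bounds, each term is at most $2\varepsilon$, and there are at most $\max(|S^*\setminus\widehat S|,|\widehat S\setminus S^*|)\leq k$ terms. This gives regret at most $2k\varepsilon$. In fact the unmatched terms give the sharper bound $\varepsilon$.

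The step I expect to need the most care is the ordering claim $r_i\leq r_{\pi(i)}$ when $\widehat S$ is not full, together with the accounting for unequal set sizes. Ties in selection do not affect the argument, because only the weak inequality is used.
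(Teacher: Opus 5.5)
Your proof is correct and uses the paper's exchange argument: pair elements of the two set differences, use the top-$k$ property of $\widehat S$ to get $r_i\le r_j$, and conclude $\mu_i-\mu_j\le 2\varepsilon$ per pair. The only difference is how you handle unequal set sizes. The paper pads both selections to exactly $k$ with zero-valued, zero-error dummy candidates so that every element is paired, whereas you treat the unmatched elements explicitly and correctly bound each one by $\varepsilon$.
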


\begin{proof}
Append $k$ dummy candidates with true and predicted utility zero. Select exactly $k$ candidates by true value and prediction, preferring dummies to real zero-valued candidates. Removing dummies recovers both positive, at-most-$k$ selectors. Let $A$ be the true selected set absent from the predicted selection, and $B$ the reverse difference, including dummies. Their cardinalities are equal and no greater than $k$. Pair their elements. For $i\in A$, $j\in B$, predicted top-$k$ selection implies $r_j\geq r_i$, giving
\[
 \mu_i-\mu_j\leq(r_i+\varepsilon)-(r_j-\varepsilon)
 \leq2\varepsilon.
\]
Sum over pairs. Dummies have zero error, and ties do not change the inequality.
\end{proof}

The assumption bounds conditional utilities uniformly. Neither MSE fitting nor empirical correlation establishes it. The result describes retrospective selection from a known candidate list, not the regret of the primary causal quota.

\paragraph{Oracle headroom without a learnable gate.}
Let the true target distribution be $(1/2,1/2)$, with bypass prediction $(0.6,0.4)$ and full prediction $(0.4,0.6)$. Realized utility is $-\log(1.5)$ for the first target and $+\log(1.5)$ for the second. Its conditional expectation is zero, so every prefix-only gate has zero expected gain. A target-aware oracle at rate $\rho\leq1/2$ activates with probability $2\rho$ on the second target and never on the first. It achieves expected reduction $\rho\log(1.5)$ despite using unavailable target information. A retrospective oracle therefore measures realized-label headroom rather than demonstrated causal learnability.

\paragraph{Prefix invariance and quota completion.}
For a declared horizon $N$, set $q_0=k$, $r_0=N$, with $0\leq k\leq N$. Reject when $q=0$, activate when $q=r>0$, and otherwise apply a fixed threshold to the current score. After rejection, $q'=q$ and $r'=r-1$; an interior rejection has $q<r$, so $q'\leq r'$. After activation, $q'=q-1\geq0$ and $r'=r-1$, preserving $q'\leq r'$. At the deadline $r_N=0$, this invariant implies $q_N=0$. The policy therefore activates exactly $k$ times. Earlier decisions depend only on prefix scores and previously determined quota state. Mutating future scores cannot change them. This proof requires a known horizon and gives neither optimal allocation nor unknown-EOS quota control. Strict threshold comparison, Python rounding, stable ordering and quota reset boundaries are recorded in the executable implementation.

\section{Supplementary reproducibility and evidence boundaries}

\paragraph{Historical evidence.}
The historical study includes 27 trained small checkpoints across the natural-language, dense-reference and synthetic configurations, plus one frozen public GPT-2 intervention. The first natural-language endpoint used retrospective exact-budget ranking. Later position, learned-state, balanced-position, causal-quota and systems checks are labelled according to their recorded exploratory timing. Outcomes from this record are not added to the six-contrast modern primary family. Training seeds and router-fitting seeds are distinct axes. Historical nats/byte and GPT-2 BPE-token quantities retain their original units and are not numerically pooled with modern-token losses.

\paragraph{Synthetic instrument and repair history.}
The engineered 40-token task contains early FACT/key/value declarations, delayed QUERY events, NOVEL events and random filler. The frozen support architecture uses a width-128 two-block local backbone, 0.02 positional scale, 6,000 training updates and value-category training weight 16. Two training-only untied 40-class heads supervise current and preceding observed tokens with CE weights 0.5 each. The preceding-token loss excludes the initial position. These 10,320 auxiliary parameters are removed at inference. The attention correction reads only observed VALUE tokens 16--23. Both auxiliary supervision and eligibility changed together after failed configurations were inspected; neither change receives a separate causal attribution. The reused earlier generator sources influenced architecture repair, making that historical support evidence exploratory.

Fresh sources retain the three frozen support checkpoints, model seeds 11, 23 and 37, and frozen routing rules. Query accuracy uses 512 independent sequences per length, whereas retention uses their first 128. The nominal $B=32$ endpoint compares current-utility retention with random, error, displacement and eligibility-aware current utility, for accuracy and NLL at both lengths. This produces 16 prespecified comparisons; sequence bootstrap uses 10,000 draws and seed 730021, with Holm adjustment within this support family. Three-checkpoint averages retain paired sequence identity, and intervals condition on the previously selected architecture and fixed weights.

At most 16 eligible VALUE states can occur in these sequences. An eligibility-aware nominal budget of 32 can retain all of them, so matching full-memory performance there does not establish optimal selection at a constrained eligible-state capacity. Secondary eligible budgets 4, 8 and 16 expose this distinction. Nominal slots and eligible states must remain separate columns in all tables. Accuracy and NLL can favor different policies: lower top-one accuracy alone is insufficient to call one retention policy worse in probabilistic prediction. All selected indices and counts are retained. These masks use materialized prefix states and do not measure physical memory reduction or bounded cache deployment.

\paragraph{Required run ledger.}
The reproducibility records distinguish completed, failed and omitted measurements. They retain model revisions, tokenizers, source hashes, data IDs, exact chunk starts, fitting candidates, selected calibration checkpoints, scaler hashes, thresholds, backend choices and collection manifests. The four model--corpus fits were sealed together before any modern test labels existed, with 240 selected controllers and 480 retained learning-rate candidates. Calibration-only repairs are separated from any repair after test observation. An outcome-triggered repair requires a new untouched test sample before a confirmatory interpretation. Failed selected-execution parity prevents transfer of dense-reference quality claims to that implementation. Synthetic repairs and systems conditions remain in the record even when they narrow the main conclusion.

\paragraph{Post-seal arithmetic verification.}
The original CPU checker encountered a two-ULP aggregate FP32 difference between directly selected branch losses and the utility-identity bookkeeping expression. Its source and failure were retained. An external reconciliation replayed the original FP32 reporting arithmetic, checked tokenwise rounding bounds and independently recomputed primary effects, intervals and multiplicity in FP64. Primary bootstrap $P$ values and Holm decisions were unchanged. No measurement, estimator, gate, threshold, endpoint or scientific tolerance was altered; the reconciliation applies only to diagnostic NLL bookkeeping.
\section{Complete pretrained-model allocation results}
All losses below use each checkpoint's tokenizer. They are not pooled across checkpoints. Three fixed router fits are averaged; book/article intervals condition on those fits and pretrained weights. These tables report the dense, consistently MATH-backed paired reference. Actual executed-policy results appear separately.
\begin{table}[H]\centering\small
\caption{Six prospective PG-19 primary contrasts at exactly 20\% calls. Gains and intervals are $10^{-3}$ nats/token. Individual 95\% intervals are descriptive; Holm-adjusted $P$ values control the six-contrast family. The machine-readable source also includes 99.167\% simultaneous intervals, all book values and each fitting seed.}
\begin{tabular}{llrr}\toprule Checkpoint & Contrast & Gain [95\% interval] & Adjusted $P$\\\midrule
SmolLM3 & J32--C48 & 0.437 [-0.138, 1.046] & 0.6019 \\
SmolLM3 & J32--C\_RP3 & -0.277 [-0.997, 0.428] & 0.8426 \\
SmolLM3 & S32\_T--S32 & -0.335 [-0.968, 0.295] & 0.8426 \\
Qwen3.5 & J32--C48 & 0.856 [-0.020, 1.736] & 0.2820 \\
Qwen3.5 & J32--C\_RP3 & -3.557 [-4.874, -2.182] & 0.0006 \\
Qwen3.5 & S32\_T--S32 & 0.426 [-0.347, 1.221] & 0.8426 \\
\bottomrule\end{tabular}\end{table}
\subsection{smollm3-3b-base / pg19\_512}
The sample contains 200 chunks, 100 book/article groups and 99,000 scored predictions. Full-operation NLL is 2.5433.
\begin{table}[H]\centering\small
\caption{Paired reference allocation at 20\% known-horizon quota. NLL/rate/harm are token-weighted and averaged over three fits; intervals for contrasts below instead weight books/articles equally. These estimands coincide for balanced PG-19 chunks but can differ for WikiText-2. Rate is exact within rounding. Harm retains the selected negative-utility sign.}
\begin{tabular}{llrrr}\toprule Target & Router & NLL & Rate & Selected harm\\\midrule
bypass & C48 & 2.5649 & 0.2000 & -0.0069 \\
bypass & C\_RP3 & 2.5642 & 0.2000 & -0.0069 \\
bypass & I\_S35 & 2.5512 & 0.2000 & -0.0090 \\
bypass & I\_ST32 & 2.5516 & 0.2000 & -0.0088 \\
bypass & I\_T32 & 2.5677 & 0.2000 & -0.0065 \\
bypass & J32 & 2.5645 & 0.2000 & -0.0069 \\
bypass & J\_KL & 2.5644 & 0.2000 & -0.0071 \\
bypass & J\_sign & 2.5686 & 0.2000 & -0.0063 \\
bypass & S32 & 2.5512 & 0.2000 & -0.0090 \\
bypass & S32\_T & 2.5516 & 0.2000 & -0.0092 \\
local & C48 & 2.5541 & 0.2000 & -0.0063 \\
local & C\_RP3 & 2.5536 & 0.2000 & -0.0064 \\
local & I\_S35 & 2.5487 & 0.2000 & -0.0075 \\
local & I\_ST32 & 2.5487 & 0.2000 & -0.0074 \\
local & I\_T32 & 2.5546 & 0.2000 & -0.0061 \\
local & J32 & 2.5535 & 0.2000 & -0.0064 \\
local & J\_KL & 2.5535 & 0.2000 & -0.0064 \\
local & J\_sign & 2.5551 & 0.2000 & -0.0053 \\
local & S32 & 2.5484 & 0.2000 & -0.0074 \\
local & S32\_T & 2.5483 & 0.2000 & -0.0074 \\
\bottomrule\end{tabular}\end{table}
\begin{table}[H]\centering\small
\caption{Secondary paired feature/objective contrasts at 20\% quota, $10^{-3}$ nats/token. These are unadjusted 95\% intervals; no additional confirmatory significance claim is made.}
\begin{tabular}{llr}\toprule Target & Contrast & Gain [95\% interval]\\\midrule
bypass & J32--C48 & 0.437 [-0.138, 1.046] \\
bypass & J32--C\_RP3 & -0.277 [-0.997, 0.428] \\
bypass & S32\_T--S32 & -0.335 [-0.968, 0.295] \\
bypass & J32--S32 & -13.268 [-14.735, -11.872] \\
bypass & I\_ST32--I\_S35 & -0.447 [-0.901, 0.027] \\
bypass & J32--J\_sign & 4.075 [3.148, 5.009] \\
bypass & J32--J\_KL & -0.043 [-0.414, 0.322] \\
local & J32--C48 & 0.615 [0.279, 0.953] \\
local & J32--C\_RP3 & 0.077 [-0.471, 0.614] \\
local & S32\_T--S32 & 0.088 [-0.250, 0.439] \\
local & J32--S32 & -5.131 [-6.032, -4.270] \\
local & I\_ST32--I\_S35 & 0.008 [-0.391, 0.401] \\
local & J32--J\_sign & 1.632 [1.069, 2.174] \\
local & J32--J\_KL & 0.027 [-0.271, 0.319] \\
\bottomrule\end{tabular}\end{table}
\subsection{smollm3-3b-base / wikitext2\_512}
The sample contains 128 chunks, 45 book/article groups and 63,360 scored predictions. Full-operation NLL is 2.1885.
\begin{table}[H]\centering\small
\caption{Paired reference allocation at 20\% known-horizon quota. NLL/rate/harm are token-weighted and averaged over three fits; intervals for contrasts below instead weight books/articles equally. These estimands coincide for balanced PG-19 chunks but can differ for WikiText-2. Rate is exact within rounding. Harm retains the selected negative-utility sign.}
\begin{tabular}{llrrr}\toprule Target & Router & NLL & Rate & Selected harm\\\midrule
bypass & C48 & 2.1966 & 0.2000 & -0.0068 \\
bypass & C\_RP3 & 2.1950 & 0.2000 & -0.0067 \\
bypass & I\_S35 & 2.1922 & 0.2000 & -0.0055 \\
bypass & I\_ST32 & 2.1916 & 0.2000 & -0.0056 \\
bypass & I\_T32 & 2.1973 & 0.2000 & -0.0053 \\
bypass & J32 & 2.1968 & 0.2000 & -0.0065 \\
bypass & J\_KL & 2.1972 & 0.2000 & -0.0071 \\
bypass & J\_sign & 2.1973 & 0.2000 & -0.0061 \\
bypass & S32 & 2.1922 & 0.2000 & -0.0055 \\
bypass & S32\_T & 2.1924 & 0.2000 & -0.0057 \\
local & C48 & 2.1929 & 0.2000 & -0.0065 \\
local & C\_RP3 & 2.1924 & 0.2000 & -0.0065 \\
local & I\_S35 & 2.1911 & 0.2000 & -0.0061 \\
local & I\_ST32 & 2.1910 & 0.2000 & -0.0060 \\
local & I\_T32 & 2.1927 & 0.2000 & -0.0050 \\
local & J32 & 2.1928 & 0.2000 & -0.0064 \\
local & J\_KL & 2.1929 & 0.2000 & -0.0076 \\
local & J\_sign & 2.1933 & 0.2000 & -0.0066 \\
local & S32 & 2.1911 & 0.2000 & -0.0064 \\
local & S32\_T & 2.1908 & 0.2000 & -0.0062 \\
\bottomrule\end{tabular}\end{table}
\begin{table}[H]\centering\small
\caption{Secondary paired feature/objective contrasts at 20\% quota, $10^{-3}$ nats/token. These are unadjusted 95\% intervals; no additional confirmatory significance claim is made.}
\begin{tabular}{llr}\toprule Target & Contrast & Gain [95\% interval]\\\midrule
bypass & J32--C48 & -0.340 [-0.992, 0.280] \\
bypass & J32--C\_RP3 & -1.544 [-2.448, -0.639] \\
bypass & S32\_T--S32 & -0.149 [-0.555, 0.248] \\
bypass & J32--S32 & -3.978 [-5.177, -2.866] \\
bypass & I\_ST32--I\_S35 & 0.469 [0.029, 0.911] \\
bypass & J32--J\_sign & 0.503 [-0.226, 1.239] \\
bypass & J32--J\_KL & 0.131 [-0.469, 0.708] \\
local & J32--C48 & -0.114 [-0.552, 0.303] \\
local & J32--C\_RP3 & -0.658 [-1.197, -0.132] \\
local & S32\_T--S32 & 0.323 [0.033, 0.631] \\
local & J32--S32 & -1.919 [-2.673, -1.160] \\
local & I\_ST32--I\_S35 & 0.239 [-0.058, 0.547] \\
local & J32--J\_sign & 0.249 [-0.339, 0.826] \\
local & J32--J\_KL & -0.311 [-0.962, 0.329] \\
\bottomrule\end{tabular}\end{table}
\subsection{smollm3-3b-base / pg19\_4096}
The sample contains 99 chunks, 99 book/article groups and 403,821 scored predictions. Full-operation NLL is 2.3695.
\begin{table}[H]\centering\small
\caption{Paired reference allocation at 20\% known-horizon quota. NLL/rate/harm are token-weighted and averaged over three fits; intervals for contrasts below instead weight books/articles equally. These estimands coincide for balanced PG-19 chunks but can differ for WikiText-2. Rate is exact within rounding. Harm retains the selected negative-utility sign.}
\begin{tabular}{llrrr}\toprule Target & Router & NLL & Rate & Selected harm\\\midrule
bypass & C48 & 2.3921 & 0.2000 & -0.0063 \\
bypass & C\_RP3 & 2.3920 & 0.2000 & -0.0062 \\
bypass & I\_S35 & 2.3787 & 0.2000 & -0.0084 \\
bypass & I\_ST32 & 2.3781 & 0.2000 & -0.0085 \\
bypass & I\_T32 & 2.3929 & 0.2000 & -0.0061 \\
bypass & J32 & 2.3922 & 0.2000 & -0.0063 \\
bypass & J\_KL & 2.3919 & 0.2000 & -0.0064 \\
bypass & J\_sign & 2.3923 & 0.2000 & -0.0062 \\
bypass & S32 & 2.3810 & 0.2000 & -0.0083 \\
bypass & S32\_T & 2.3823 & 0.2000 & -0.0084 \\
local & C48 & 2.3848 & 0.2000 & -0.0070 \\
local & C\_RP3 & 2.3849 & 0.2000 & -0.0065 \\
local & I\_S35 & 2.3771 & 0.2000 & -0.0083 \\
local & I\_ST32 & 2.3779 & 0.2000 & -0.0081 \\
local & I\_T32 & 2.3852 & 0.2000 & -0.0064 \\
local & J32 & 2.3850 & 0.2000 & -0.0065 \\
local & J\_KL & 2.3849 & 0.2000 & -0.0065 \\
local & J\_sign & 2.3852 & 0.2000 & -0.0063 \\
local & S32 & 2.3797 & 0.2000 & -0.0078 \\
local & S32\_T & 2.3799 & 0.2000 & -0.0076 \\
\bottomrule\end{tabular}\end{table}
\begin{table}[H]\centering\small
\caption{Secondary paired feature/objective contrasts at 20\% quota, $10^{-3}$ nats/token. These are unadjusted 95\% intervals; no additional confirmatory significance claim is made.}
\begin{tabular}{llr}\toprule Target & Contrast & Gain [95\% interval]\\\midrule
bypass & J32--C48 & -0.067 [-0.199, 0.065] \\
bypass & J32--C\_RP3 & -0.201 [-0.370, -0.036] \\
bypass & S32\_T--S32 & -1.274 [-1.768, -0.847] \\
bypass & J32--S32 & -11.151 [-12.275, -10.100] \\
bypass & I\_ST32--I\_S35 & 0.566 [0.308, 0.833] \\
bypass & J32--J\_sign & 0.132 [-0.374, 0.617] \\
bypass & J32--J\_KL & -0.296 [-0.487, -0.108] \\
local & J32--C48 & -0.145 [-0.314, 0.028] \\
local & J32--C\_RP3 & -0.123 [-0.290, 0.031] \\
local & S32\_T--S32 & -0.231 [-0.511, 0.031] \\
local & J32--S32 & -5.266 [-5.937, -4.647] \\
local & I\_ST32--I\_S35 & -0.879 [-1.207, -0.591] \\
local & J32--J\_sign & 0.266 [0.126, 0.400] \\
local & J32--J\_KL & -0.034 [-0.101, 0.027] \\
\bottomrule\end{tabular}\end{table}
\subsection{qwen3.5-4b-base / pg19\_512}
The sample contains 200 chunks, 100 book/article groups and 99,000 scored predictions. Full-operation NLL is 2.5703.
\begin{table}[H]\centering\small
\caption{Paired reference allocation at 20\% known-horizon quota. NLL/rate/harm are token-weighted and averaged over three fits; intervals for contrasts below instead weight books/articles equally. These estimands coincide for balanced PG-19 chunks but can differ for WikiText-2. Rate is exact within rounding. Harm retains the selected negative-utility sign.}
\begin{tabular}{llrrr}\toprule Target & Router & NLL & Rate & Selected harm\\\midrule
bypass & C48 & 2.6922 & 0.2000 & -0.0281 \\
bypass & C\_RP3 & 2.6878 & 0.2000 & -0.0272 \\
bypass & I\_S35 & 2.6834 & 0.2000 & -0.0269 \\
bypass & I\_ST32 & 2.6833 & 0.2000 & -0.0266 \\
bypass & I\_T32 & 2.6989 & 0.2000 & -0.0243 \\
bypass & J32 & 2.6914 & 0.2000 & -0.0279 \\
bypass & J\_KL & 2.6904 & 0.2000 & -0.0280 \\
bypass & J\_sign & 2.7195 & 0.2000 & -0.0103 \\
bypass & S32 & 2.6807 & 0.2000 & -0.0272 \\
bypass & S32\_T & 2.6803 & 0.2000 & -0.0273 \\
local & C48 & 2.5768 & 0.2000 & -0.0086 \\
local & C\_RP3 & 2.5761 & 0.2000 & -0.0086 \\
local & I\_S35 & 2.5761 & 0.2000 & -0.0084 \\
local & I\_ST32 & 2.5759 & 0.2000 & -0.0088 \\
local & I\_T32 & 2.5775 & 0.2000 & -0.0071 \\
local & J32 & 2.5770 & 0.2000 & -0.0090 \\
local & J\_KL & 2.5767 & 0.2000 & -0.0091 \\
local & J\_sign & 2.5781 & 0.2000 & -0.0060 \\
local & S32 & 2.5766 & 0.2000 & -0.0083 \\
local & S32\_T & 2.5762 & 0.2000 & -0.0087 \\
\bottomrule\end{tabular}\end{table}
\begin{table}[H]\centering\small
\caption{Secondary paired feature/objective contrasts at 20\% quota, $10^{-3}$ nats/token. These are unadjusted 95\% intervals; no additional confirmatory significance claim is made.}
\begin{tabular}{llr}\toprule Target & Contrast & Gain [95\% interval]\\\midrule
bypass & J32--C48 & 0.856 [-0.020, 1.736] \\
bypass & J32--C\_RP3 & -3.557 [-4.874, -2.182] \\
bypass & S32\_T--S32 & 0.426 [-0.347, 1.221] \\
bypass & J32--S32 & -10.676 [-12.569, -8.871] \\
bypass & I\_ST32--I\_S35 & 0.087 [-0.649, 0.838] \\
bypass & J32--J\_sign & 28.125 [26.114, 30.133] \\
bypass & J32--J\_KL & -0.961 [-1.891, 0.014] \\
local & J32--C48 & -0.185 [-0.537, 0.174] \\
local & J32--C\_RP3 & -0.836 [-1.375, -0.328] \\
local & S32\_T--S32 & 0.339 [0.076, 0.617] \\
local & J32--S32 & -0.409 [-0.950, 0.142] \\
local & I\_ST32--I\_S35 & 0.258 [-0.012, 0.517] \\
local & J32--J\_sign & 1.145 [0.468, 1.826] \\
local & J32--J\_KL & -0.215 [-0.446, 0.011] \\
\bottomrule\end{tabular}\end{table}
\subsection{qwen3.5-4b-base / wikitext2\_512}
The sample contains 128 chunks, 46 book/article groups and 63,360 scored predictions. Full-operation NLL is 2.1835.
\begin{table}[H]\centering\small
\caption{Paired reference allocation at 20\% known-horizon quota. NLL/rate/harm are token-weighted and averaged over three fits; intervals for contrasts below instead weight books/articles equally. These estimands coincide for balanced PG-19 chunks but can differ for WikiText-2. Rate is exact within rounding. Harm retains the selected negative-utility sign.}
\begin{tabular}{llrrr}\toprule Target & Router & NLL & Rate & Selected harm\\\midrule
bypass & C48 & 2.3575 & 0.2000 & -0.0279 \\
bypass & C\_RP3 & 2.3513 & 0.2000 & -0.0294 \\
bypass & I\_S35 & 2.3216 & 0.2000 & -0.0309 \\
bypass & I\_ST32 & 2.3221 & 0.2000 & -0.0304 \\
bypass & I\_T32 & 2.3735 & 0.2000 & -0.0233 \\
bypass & J32 & 2.3568 & 0.2000 & -0.0285 \\
bypass & J\_KL & 2.3558 & 0.2000 & -0.0290 \\
bypass & J\_sign & 2.4141 & 0.2000 & -0.0072 \\
bypass & S32 & 2.3209 & 0.2000 & -0.0312 \\
bypass & S32\_T & 2.3200 & 0.2000 & -0.0306 \\
local & C48 & 2.2019 & 0.2000 & -0.0086 \\
local & C\_RP3 & 2.2021 & 0.2000 & -0.0085 \\
local & I\_S35 & 2.1959 & 0.2000 & -0.0104 \\
local & I\_ST32 & 2.1950 & 0.2000 & -0.0101 \\
local & I\_T32 & 2.2006 & 0.2000 & -0.0075 \\
local & J32 & 2.2009 & 0.2000 & -0.0089 \\
local & J\_KL & 2.2015 & 0.2000 & -0.0090 \\
local & J\_sign & 2.2075 & 0.2000 & -0.0073 \\
local & S32 & 2.1945 & 0.2000 & -0.0104 \\
local & S32\_T & 2.1949 & 0.2000 & -0.0103 \\
\bottomrule\end{tabular}\end{table}
\begin{table}[H]\centering\small
\caption{Secondary paired feature/objective contrasts at 20\% quota, $10^{-3}$ nats/token. These are unadjusted 95\% intervals; no additional confirmatory significance claim is made.}
\begin{tabular}{llr}\toprule Target & Contrast & Gain [95\% interval]\\\midrule
bypass & J32--C48 & 1.561 [-0.364, 3.533] \\
bypass & J32--C\_RP3 & -6.080 [-9.989, -2.365] \\
bypass & S32\_T--S32 & 0.773 [-0.844, 2.213] \\
bypass & J32--S32 & -36.719 [-42.281, -31.328] \\
bypass & I\_ST32--I\_S35 & -0.470 [-1.959, 1.045] \\
bypass & J32--J\_sign & 56.417 [52.383, 60.394] \\
bypass & J32--J\_KL & -1.625 [-2.855, -0.425] \\
local & J32--C48 & 1.087 [-0.305, 2.877] \\
local & J32--C\_RP3 & 0.971 [-0.506, 2.430] \\
local & S32\_T--S32 & -0.204 [-1.611, 1.195] \\
local & J32--S32 & -5.451 [-8.027, -2.876] \\
local & I\_ST32--I\_S35 & -0.224 [-1.671, 1.154] \\
local & J32--J\_sign & 6.540 [3.279, 10.570] \\
local & J32--J\_KL & 0.584 [-0.150, 1.343] \\
\bottomrule\end{tabular}\end{table}
\subsection{qwen3.5-4b-base / pg19\_4096}
The sample contains 99 chunks, 99 book/article groups and 403,821 scored predictions. Full-operation NLL is 2.3863.
\begin{table}[H]\centering\small
\caption{Paired reference allocation at 20\% known-horizon quota. NLL/rate/harm are token-weighted and averaged over three fits; intervals for contrasts below instead weight books/articles equally. These estimands coincide for balanced PG-19 chunks but can differ for WikiText-2. Rate is exact within rounding. Harm retains the selected negative-utility sign.}
\begin{tabular}{llrrr}\toprule Target & Router & NLL & Rate & Selected harm\\\midrule
bypass & C48 & 2.5045 & 0.2000 & -0.0261 \\
bypass & C\_RP3 & 2.5009 & 0.2000 & -0.0263 \\
bypass & I\_S35 & 2.4975 & 0.2000 & -0.0258 \\
bypass & I\_ST32 & 2.4952 & 0.2000 & -0.0258 \\
bypass & I\_T32 & 2.5111 & 0.2000 & -0.0228 \\
bypass & J32 & 2.5027 & 0.2000 & -0.0263 \\
bypass & J\_KL & 2.5087 & 0.2000 & -0.0234 \\
bypass & J\_sign & 2.5171 & 0.2000 & -0.0179 \\
bypass & S32 & 2.4935 & 0.2000 & -0.0268 \\
bypass & S32\_T & 2.4965 & 0.2000 & -0.0261 \\
local & C48 & 2.4040 & 0.2000 & -0.0100 \\
local & C\_RP3 & 2.4024 & 0.2000 & -0.0107 \\
local & I\_S35 & 2.4019 & 0.2000 & -0.0109 \\
local & I\_ST32 & 2.4026 & 0.2000 & -0.0106 \\
local & I\_T32 & 2.4048 & 0.2000 & -0.0086 \\
local & J32 & 2.4037 & 0.2000 & -0.0100 \\
local & J\_KL & 2.4034 & 0.2000 & -0.0104 \\
local & J\_sign & 2.4048 & 0.2000 & -0.0086 \\
local & S32 & 2.4036 & 0.2000 & -0.0098 \\
local & S32\_T & 2.4035 & 0.2000 & -0.0099 \\
\bottomrule\end{tabular}\end{table}
\begin{table}[H]\centering\small
\caption{Secondary paired feature/objective contrasts at 20\% quota, $10^{-3}$ nats/token. These are unadjusted 95\% intervals; no additional confirmatory significance claim is made.}
\begin{tabular}{llr}\toprule Target & Contrast & Gain [95\% interval]\\\midrule
bypass & J32--C48 & 1.738 [1.356, 2.110] \\
bypass & J32--C\_RP3 & -1.794 [-2.342, -1.263] \\
bypass & S32\_T--S32 & -2.925 [-3.376, -2.490] \\
bypass & J32--S32 & -9.207 [-10.166, -8.250] \\
bypass & I\_ST32--I\_S35 & 2.306 [1.715, 2.939] \\
bypass & J32--J\_sign & 14.397 [13.473, 15.336] \\
bypass & J32--J\_KL & 5.915 [5.276, 6.579] \\
local & J32--C48 & 0.281 [-0.028, 0.582] \\
local & J32--C\_RP3 & -1.236 [-1.600, -0.864] \\
local & S32\_T--S32 & 0.107 [-0.035, 0.244] \\
local & J32--S32 & -0.093 [-0.486, 0.304] \\
local & I\_ST32--I\_S35 & -0.651 [-0.844, -0.456] \\
local & J32--J\_sign & 1.131 [0.790, 1.488] \\
local & J32--J\_KL & -0.251 [-0.598, 0.097] \\
\bottomrule\end{tabular}\end{table}
\section{Fresh-source retention controls}
\begin{table}[H]\centering\small
\caption{Fresh-source retention performance at length 256, averaged over three fixed repaired checkpoints. Accuracy (\%) and query NLL use the first 128 sequences. The nominal B32 eligibility-aware policy retains all eligible VALUE states and is nonbinding. Binding eligible budgets are secondary.}
\begin{tabular}{llrr}\toprule Length & Retention policy & Accuracy (\%) & NLL\\\midrule
256 & full & 100.00 & 0.0026 \\
256 & recent16 & 11.52 & 2.2821 \\
256 & recent\_extended\_B16 & 10.74 & 2.3735 \\
256 & random\_B16 & 17.25 & 2.6083 \\
256 & error\_B16 & 100.00 & 0.0026 \\
256 & displacement\_B16 & 95.90 & 0.2079 \\
256 & current\_utility\_B16 & 11.52 & 2.2821 \\
256 & eligible\_current\_utility\_B16 & 100.00 & 0.0026 \\
256 & recent\_extended\_B32 & 11.78 & 2.5058 \\
256 & random\_B32 & 23.76 & 2.6959 \\
256 & error\_B32 & 100.00 & 0.0026 \\
256 & displacement\_B32 & 100.00 & 0.0026 \\
256 & current\_utility\_B32 & 12.63 & 2.4243 \\
256 & eligible\_current\_utility\_B32 & 100.00 & 0.0026 \\
256 & recent\_extended\_B64 & 21.81 & 2.5633 \\
256 & random\_B64 & 41.08 & 2.3788 \\
256 & error\_B64 & 100.00 & 0.0026 \\
256 & displacement\_B64 & 100.00 & 0.0026 \\
256 & current\_utility\_B64 & 20.44 & 2.6658 \\
256 & eligible\_current\_utility\_B64 & 100.00 & 0.0026 \\
256 & eligible\_random\_bindingB4 & 42.77 & 2.4462 \\
256 & eligible\_current\_utility\_bindingB4 & 40.17 & 2.4944 \\
256 & eligible\_error\_bindingB4 & 42.58 & 2.4493 \\
256 & eligible\_displacement\_bindingB4 & 39.13 & 2.8176 \\
256 & eligible\_fact\_marker\_bindingB4 & 51.63 & 2.1607 \\
256 & eligible\_random\_bindingB8 & 79.36 & 0.9283 \\
256 & eligible\_current\_utility\_bindingB8 & 76.69 & 1.0417 \\
256 & eligible\_error\_bindingB8 & 82.94 & 0.8099 \\
256 & eligible\_displacement\_bindingB8 & 81.18 & 0.9527 \\
256 & eligible\_fact\_marker\_bindingB8 & 100.00 & 0.0027 \\
256 & eligible\_random\_bindingB16 & 100.00 & 0.0026 \\
256 & eligible\_current\_utility\_bindingB16 & 100.00 & 0.0026 \\
256 & eligible\_error\_bindingB16 & 100.00 & 0.0026 \\
256 & eligible\_displacement\_bindingB16 & 100.00 & 0.0026 \\
256 & eligible\_fact\_marker\_bindingB16 & 100.00 & 0.0026 \\
\bottomrule\end{tabular}\end{table}
\begin{table}[H]\centering\small
\caption{Fresh-source retention performance at length 384, averaged over three fixed repaired checkpoints. Accuracy (\%) and query NLL use the first 128 sequences. The nominal B32 eligibility-aware policy retains all eligible VALUE states and is nonbinding. Binding eligible budgets are secondary.}
\begin{tabular}{llrr}\toprule Length & Retention policy & Accuracy (\%) & NLL\\\midrule
384 & full & 99.09 & 0.1443 \\
384 & recent16 & 12.70 & 2.1798 \\
384 & recent\_extended\_B16 & 10.74 & 2.2812 \\
384 & random\_B16 & 17.25 & 2.3107 \\
384 & error\_B16 & 99.09 & 0.1443 \\
384 & displacement\_B16 & 93.62 & 0.4022 \\
384 & current\_utility\_B16 & 12.76 & 2.1793 \\
384 & eligible\_current\_utility\_B16 & 99.09 & 0.1443 \\
384 & recent\_extended\_B32 & 12.04 & 2.3836 \\
384 & random\_B32 & 19.47 & 2.4393 \\
384 & error\_B32 & 99.09 & 0.1443 \\
384 & displacement\_B32 & 99.09 & 0.1441 \\
384 & current\_utility\_B32 & 13.61 & 2.2966 \\
384 & eligible\_current\_utility\_B32 & 99.09 & 0.1443 \\
384 & recent\_extended\_B64 & 13.15 & 2.5949 \\
384 & random\_B64 & 25.20 & 2.6615 \\
384 & error\_B64 & 99.09 & 0.1443 \\
384 & displacement\_B64 & 99.09 & 0.1443 \\
384 & current\_utility\_B64 & 24.22 & 2.4116 \\
384 & eligible\_current\_utility\_B64 & 99.09 & 0.1443 \\
384 & eligible\_random\_bindingB4 & 37.89 & 2.4993 \\
384 & eligible\_current\_utility\_bindingB4 & 30.99 & 2.5530 \\
384 & eligible\_error\_bindingB4 & 45.96 & 2.3154 \\
384 & eligible\_displacement\_bindingB4 & 40.56 & 2.5035 \\
384 & eligible\_fact\_marker\_bindingB4 & 47.92 & 2.2682 \\
384 & eligible\_random\_bindingB8 & 75.72 & 1.1069 \\
384 & eligible\_current\_utility\_bindingB8 & 66.54 & 1.4080 \\
384 & eligible\_error\_bindingB8 & 84.31 & 0.8030 \\
384 & eligible\_displacement\_bindingB8 & 79.49 & 1.0472 \\
384 & eligible\_fact\_marker\_bindingB8 & 99.22 & 0.1406 \\
384 & eligible\_random\_bindingB16 & 99.09 & 0.1443 \\
384 & eligible\_current\_utility\_bindingB16 & 99.09 & 0.1443 \\
384 & eligible\_error\_bindingB16 & 99.09 & 0.1443 \\
384 & eligible\_displacement\_bindingB16 & 99.09 & 0.1443 \\
384 & eligible\_fact\_marker\_bindingB16 & 99.09 & 0.1443 \\
\bottomrule\end{tabular}\end{table}
\begin{table}[H]\centering\small
\caption{The sixteen prespecified fresh-source contrasts. Accuracy gains are percentage points; NLL reductions are nats/query token. Positive favors the first policy named. Intervals resample 128 independent sequences, retaining paired checkpoint averages. Holm correction applies to all sixteen endpoints.}
\begin{tabular}{llrr}\toprule Contrast & Endpoint & Gain [95\% interval] & Adjusted $P$\\\midrule
L256 random & Accuracy & 11.133 [7.552, 14.844] & 0.0016 \\
L256 random & NLL & -0.272 [-0.410, -0.139] & 0.0016 \\
L256 error & Accuracy & 87.370 [85.156, 89.518] & 0.0016 \\
L256 error & NLL & 2.422 [2.368, 2.479] & 0.0016 \\
L256 displacement & Accuracy & 87.370 [85.156, 89.518] & 0.0016 \\
L256 displacement & NLL & 2.422 [2.368, 2.479] & 0.0016 \\
L256 eligible current utility & Accuracy & 87.370 [85.156, 89.518] & 0.0016 \\
L256 eligible current utility & NLL & 2.422 [2.368, 2.479] & 0.0016 \\
L384 random & Accuracy & 5.859 [2.865, 8.919] & 0.0016 \\
L384 random & NLL & -0.143 [-0.239, -0.050] & 0.0030 \\
L384 error & Accuracy & 85.482 [83.333, 87.565] & 0.0016 \\
L384 error & NLL & 2.152 [2.102, 2.206] & 0.0016 \\
L384 displacement & Accuracy & 85.482 [83.333, 87.565] & 0.0016 \\
L384 displacement & NLL & 2.152 [2.102, 2.206] & 0.0016 \\
L384 eligible current utility & Accuracy & 85.482 [83.333, 87.565] & 0.0016 \\
L384 eligible current utility & NLL & 2.152 [2.102, 2.206] & 0.0016 \\
\bottomrule\end{tabular}\end{table}
\section{Historical causal-quota context}
The earlier byte models and GPT-2 motivated the modern follow-up. Their stronger displacement/position controls materially reduced the apparent trajectory gain. Natural-language causal-quota intervals for the strongest control included zero in all three trained configurations. The GPT-2 contrast was positive for one fixed checkpoint. These exploratory results are retained in their own units and are not added to the modern primary family.
\begin{table}[H]
\centering\small
\caption{Prefix-only known-horizon quota policy at matched optional-branch call counts, $10^{-3}$ nats per unit. Natural scored horizons round to 48/239 calls (20.08\%); GPT-2 uses 99/495 (20.00\%). Brackets condition on the fitted models and resample articles; this is an exploratory deadline heuristic, not a measured-cost or optimal-policy guarantee.}
\label{table:quota}
\begin{tabular}{llcc}
\toprule
Model & Unit & Joint $-$ uncertainty & Both control $D$ and position \\
\midrule
3.48M & byte & 0.528 [0.271, 0.782] & 0.049 [-0.217, 0.310] \\
20.08M & byte & 0.537 [-0.165, 1.229] & 0.234 [-0.244, 0.789] \\
3.48M, balanced & byte & 0.161 [0.057, 0.265] & 0.097 [-0.038, 0.227] \\
GPT-2 124M & BPE token & 20.574 [16.974, 24.215] & 7.987 [4.944, 10.846] \\
\bottomrule
\end{tabular}
\end{table}

\section{Historical training and exploratory evaluation}
\label{sec:experiments}

\paragraph{Natural-language training and independent splits.}
We downloaded WikiText-2 raw from the official Salesforce repository at revision \texttt{b08601e04326c79dfdd32d625aee71d232d685c3} \citep{merity2017pointer}. Level-one article headings defined source groups. The official training split was divided in article order into 549 backbone-training articles (9,669,521 UTF-8 bytes) and 61 router-supervision articles (1,244,408 bytes). Official validation and test supplied 60 and 61 articles, respectively. Exact article hashes did not overlap across these four roles. We used a fixed 256-byte vocabulary with no learned tokenizer. Article-local, nonoverlapping 257-byte segments supplied 256 next-byte predictions; a seeded sample of 512 segments per router-supervision, validation, and test role was evaluated. Removing the first 17 positions left 122,368 test predictions from 59 articles. Thus reported bits per byte (BPB) concern this finite-context evaluation, rather than a published word-level WikiText perplexity protocol.

The primary hybrid had four local causal Transformer blocks, width 256, four attention heads, window 16 including the current position, and one output-only global attention correction. Its parameter count was 3,483,648. Linear layers used Gaussian initialization with standard deviation 0.02, GELU feed-forward expansion four, LayerNorm, a tied byte head, and fixed sinusoidal positions. The original position amplitude was one. Three seeds (11, 23, 37) each trained for 6,000 updates, batch 32, context 256, or 49.152 million byte predictions. AdamW used peak learning rate $5\times10^{-4}$, 100 warmup updates, cosine decay to 10\% of peak, weight decay 0.1, betas $(0.9,0.95)$, and gradient clipping one. The hybrid minimized the equally weighted mean of fast and corrected losses. No routed outputs changed later fast states.

A larger probe used width 512 and six local blocks with the other settings unchanged. A small-scale positional-amplitude probe used 0.02 for both hybrid and dense variants, balancing position and token initialization scales. Each architecture also had a dense same-shape reference with every backbone attention mask causal and unrestricted, trained on the same tokens and updates using corrected loss alone. These references were not an extensively tuned dense baseline, and the different objectives preclude an equal-training-FLOP claim. The local receptive fields were 61 input positions for four blocks and 91 for six; changing width and depth therefore did not isolate parameter count alone. The larger scale, balanced-position probe, and uncertainty-plus-displacement control were exploratory additions after the first small-model outcomes, documented as such rather than presented as preregistered confirmation.

\paragraph{Frozen public-model intervention.}
We downloaded GPT-2 (124,439,808 parameters) at repository revision \texttt{607a30d783dfa663caf39e06633721c8d4cfcd7e}, verifying the official weight SHA-256 \citep{radford2019language}. The first eleven dense blocks were shared. In the final block we enabled or bypassed its attention residual; both branches retained that block's MLP, final normalization, and tied head. Trajectory features used the bypass output before final normalization. This diagnostic tests a same-state operation intervention in a trained public model; eleven dense blocks still ran at every position, so it was not a cheap CURVE-LM architecture. A custom forward pass matched the official Transformers implementation on CPU, with maximum absolute validation-logit difference $1.68\times10^{-4}$ and identical validation NLL. Future-token mutation changed preceding paired states and features by exactly zero.

The same article partitions were independently tokenized with the pinned GPT-2 BPE tokenizer. Each role used 128 nonoverlapping 513-token segments and 512 predictions. After the common warmup, test contained 63,360 BPE predictions from 42 articles. Collection used FP32, batch two. GPT-2 NLL is in nats per BPE token; its causal perplexity is not comparable numerically to the byte models' BPB. We did not fine-tune this checkpoint or claim independent backbone-training replicates for it.

\paragraph{Matched router controls and calibration.}
The learned subsets were uncertainty ($H$, probability margin), displacement ($D$), uncertainty plus displacement, trajectory ($E,C,D,A$), and joint trajectory plus uncertainty. Position-only and position-augmented variants tested a context-opportunity confound. A fixed orthogonal 32-dimensional projection of the current normalized state, followed by the same width-32 MLP, was a stronger small-model state control. It had 1,089 learned parameters versus 257 for the joint router; it was not a reproduction of Mixture-of-Depths. Projection seed 127 was fixed independently of outcomes. Raw entropy, error, displacement, random scores, and a target-informed oracle were also reported.

For natural language and GPT-2, each learned router used 1,000 AdamW updates with learning rate 0.002, weight decay 0.01, batch 1,024, mean-squared utility loss, and initialization seed 901. Training-feature means/standard deviations and target scaling were fit only on router-supervision data. Validation MSE selected a checkpoint every 100 updates. The same optimization recipe applied to all subsets within a study. Budgets were 5, 10, 20, 30, and 50\%. Retrospective ranking selected exactly the requested fraction of the scored test bank. Causal thresholds were validation-score quantiles frozen before test, with actual rates reported. Exact-quota diagnostics may include predicted negative values; they do not implement Proposition~\ref{supp:budget}'s positive-clipped upper-bound optimizer. No test target entered a learned decision.

\paragraph{Associative recall and retention.}
We separately trained a width-128, two-block model on generated length-256 sequences with a 40-token vocabulary. Each sequence had eight early FACT, key and value declarations, four delayed QUERY events with associated values, four NOVEL events with independent values, and uniform random filler. Values, keys, order, positions, and filler varied across independent seeds. Query and novel targets shared the eight-value marginal distribution, while markers deliberately indicated retrieval versus new sampling. The minimum fact-to-query delay exceeded the local receptive field. This engineered task isolates retrieval; success does not establish a general semantic context-switch mechanism.

Three backbone seeds each trained on 1,000 fresh batches of size 64, with AdamW peak learning rate 0.0007 and the same warmup/decay/clipping conventions. Every value-category target, including initial and novel facts, received weight four; evaluation was unweighted. Independent generator seeds supplied 384 router-training, 256 calibration, 512 test sequences, plus 256 length-384 shift sequences. Routers used 600 updates under a common within-study recipe. They received no event label or target value.

Retention diagnostics attended to the most recent 16 keys plus up to $B\in\{16,32,64\}$ historical keys, using recent, prefix-uniform stride, random priority, error, displacement, predicted current utility, or normalized utility-plus-error priorities. Masks at position $t$ used only scores from its prefix and selected equal nominal retention-slot counts. Fast states were materialized beforehand, so this tested selection quality rather than bounded-storage deployment. It did not reproduce H$_2$O or SnapKV. Retention was evaluated separately on 64 test sequences and 64 shifted sequences per seed.

\paragraph{Endpoints, uncertainty, and execution.}
The initial primary endpoint was signed captured utility for joint minus learned uncertainty at an exact retrospective 20\% query budget. Positive captured utility was a secondary metric because it omits selected harms. Correlation, AUROC for positive utility, window $k\in\{3,4,8,16\}$, and stronger controls characterized the signal. Paired 95\% percentile intervals used 2,000 whole-article bootstrap resamples for natural language/GPT-2 and whole-sequence resamples for generated data. These conditional intervals describe evaluation-unit variability for fixed fitted models; three-seed mean and sample standard deviation describe a separate training-variation axis. Added contrasts and sensitivity analyses were exploratory, with no multiple-benchmark significance claim.

All GPU experiments used an RTX 5080 (16,303 MiB), PyTorch 2.11.0 with CUDA 12.8, and a Ryzen 9 9950X3D host. Small-model training used BF16 autocast with FP32 losses and optimizer states. This Windows build used efficient SDPA; a forced FlashAttention check found no available kernel. We timed actual selected-query correction, including feature extraction, the pre-gate vocabulary head, router, synchronization, gathering, all context projections, and required output heads. Optimized always-global inference used a single final vocabulary head. Prefill/scoring trials covered contexts 256, 1,024, and 2,048 at stated batches; longer contexts were timing probes, not trained quality benchmarks. Incremental local caches and an accumulating global KV store timed a fixed ground-truth continuation, not free-generation task quality. Warmup was discarded, GPU execution synchronized, and raw wall-clock trials retained. CPU FP32 invariants and GPU BF16 NLL/logit checks verified cache and branch equivalence before interpreting timing.

\paragraph{Exploratory causal quotas.}
After the initial outcomes were observed, we recorded a fixed follow-up rule
before computing its results. For each saved scored segment of known length
$N$ and nominal budget $\rho$, the rule allocated $k=\operatorname{round}(\rho N)$
calls, using Python's rounding convention. It reused each router's existing
validation-calibrated threshold $\tau$, without fitting or recalibration.
Let $q_t$ be remaining calls and $r_t$ remaining positions, including the current
position, using zero-based scored positions with $q_0=k$ and $r_0=N$. The decision rejected when $q_t=0$, activated when $q_t=r_t>0$,
and otherwise activated exactly when the current score exceeded $\tau$.
The state then updated as $q_{t+1}=q_t-g_t$ and $r_{t+1}=r_t-1$. The horizon was the scored suffix
after warmup, and quota state reset for each segment.

The invariant $0\le q_t\le r_t$ holds initially and after every decision.
At the deadline $r_N=0$, it implies $q_N=0$, hence exactly $k$ activations.
Each decision depends only on the current score, a fixed threshold, and state
determined by earlier decisions; changing future scores therefore cannot change
an earlier gate. Tests checked integer counts, ties, zero/full quotas, and prefix
invariance under future-score mutation. All budgets and deployable saved routers
were evaluated, with target-aware oracles excluded. Paired routed losses and
synthetic query accuracy were computed from saved branch outcomes; 20\% contrasts
used 2,000 article/sequence bootstrap resamples with seed 217.

This is a teacher-forced deadline heuristic on declared fixed horizons, not an
optimal selector or an unknown-EOS serving policy. Rounding slightly changes
nominal rates, and deadline forcing can select low-utility positions. Equal
invocation counts do not establish equal latency or equal context-dependent
attention cost. The analysis is exploratory and leaves earlier results unchanged.

\paragraph{Synthetic repair disclosure.}
The original three-seed probe used 1,000 updates, position scale one, and value weight four. A subsequent three-seed probe used 6,000 updates, position scale 0.02, and value weight 16, but still failed retrieval. The final three-seed support configuration kept the latter schedule and added two training-only untied 40-class heads (10,320 parameters) reading the correction-normalized fast state. Current and previous observed-token CE had weights 0.5 each; the previous loss excluded the first position. The correction context was also restricted to observed prefix value tokens 16--23. Both modifications were made jointly before observing support-run outcomes; no causal attribution to the auxiliary objective alone is made. Auxiliary heads were discarded at inference. These 466,688-parameter support models used the same generator and held-out splits, not future targets or generator answer mappings as inputs. Every retention mask selected the same number of nominal slots from materialized states; the correction could use only eligible values among them. The repairs were selected after inspecting outcomes on the same fixed test generator sources, so the support results are exploratory rather than untouched confirmatory evaluation. A CPU linear recoverability diagnostic already found 91.4\% previous-key recovery in the longer failed model, so information absence alone does not explain its failure.

For fixed-seed mean intervals, we averaged paired per-position differences across the three models, then resampled the shared article IDs. This interval is conditional on those fitted models and is not a confidence interval over future model-training runs. All added contrasts are exploratory. The raw archives retain sign-AUROC, Pearson/Spearman, captured positive utility, selected harms, all budgets, and failed configurations; no universal improvement on every predictive metric is claimed.

\section{Historical configurations, seed variation and failures}

\begin{table}[H]
\centering\small
\caption{Individual-seed primary contrasts at exact 20\% budgets, $10^{-3}$ nats/byte. These remain visible alongside the fixed-seed mean intervals.}
\label{table:seeds}
\begin{tabular}{llrr}
\toprule
Configuration & Seed & Paired gain & 95\% article interval \\
\midrule
3.48M & 11 & 0.738 & [0.301, 1.211] \\
3.48M & 23 & 0.518 & [0.197, 0.859] \\
3.48M & 37 & 0.418 & [0.023, 0.846] \\
20.08M & 11 & 1.859 & [0.956, 2.758] \\
20.08M & 23 & 1.562 & [0.527, 2.644] \\
20.08M & 37 & 0.830 & [-0.187, 1.977] \\
Balanced 3.48M & 11 & 0.232 & [0.091, 0.381] \\
Balanced 3.48M & 23 & 0.299 & [0.101, 0.517] \\
Balanced 3.48M & 37 & -0.015 & [-0.215, 0.155] \\
\bottomrule
\end{tabular}
\end{table}

\begin{table}[H]
\centering\small
\caption{Finite-context test BPB, mean $\pm$ sample SD across three seeds. Threshold joint uses a frozen validation threshold targeting 20\%, with potentially different actual test rates; it is not the fixed-horizon quota policy. Dense references have the same parameter shapes and token/update exposure, but a different training objective and no extensive tuning. Amplitude sensitivity demonstrates why these are not evidence of superiority over an optimized dense model.}
\label{table:baselines}
\begin{tabular}{lrrrr}
\toprule
Configuration & Fast & Full correction & Threshold joint & Dense reference \\
\midrule
Original 3.48M & 1.997 $\pm$ 0.009 & 1.976 $\pm$ 0.008 & 1.981 $\pm$ 0.008 & 2.201 $\pm$ 0.051 \\
20.08M & 1.877 $\pm$ 0.009 & 1.834 $\pm$ 0.010 & 1.854 $\pm$ 0.009 & 1.949 $\pm$ 0.006 \\
Balanced 3.48M & 1.921 $\pm$ 0.001 & 1.910 $\pm$ 0.002 & 1.912 $\pm$ 0.002 & 1.911 $\pm$ 0.004 \\
\bottomrule
\end{tabular}
\end{table}

\begin{table}[H]
\centering\small
\caption{Trajectory-window sensitivity: signed utility captured by the joint router at an exact retrospective 20\% query budget, $10^{-3}$ nats/byte, mean $\pm$ sample SD over three seeds. Each window used the same scored suffix and router recipe; $k=4$ was the original setting. The values are reductions from the fast branch, not incremental gains over uncertainty. This exploratory diagnostic selects scores retrospectively and does not identify a universally optimal window.}
\label{table:windows}
\begin{tabular}{lrrrr}
\toprule
Configuration & $k=3$ & $k=4$ & $k=8$ & $k=16$ \\
\midrule
Original 3.48M & 10.891 $\pm$ 0.323 & 10.891 $\pm$ 0.061 & 10.662 $\pm$ 0.199 & 10.843 $\pm$ 0.367 \\
20.08M & 16.445 $\pm$ 0.643 & 15.962 $\pm$ 0.418 & 16.299 $\pm$ 0.149 & 16.696 $\pm$ 0.388 \\
Balanced 3.48M & 6.226 $\pm$ 0.469 & 6.262 $\pm$ 0.441 & 6.136 $\pm$ 0.528 & 6.239 $\pm$ 0.530 \\
\bottomrule
\end{tabular}
\end{table}

\begin{table}[H]
\centering\small
\caption{All synthetic training attempts, three seeds per configuration. Unsuccessful correction branches are retained and not used to infer retention quality. The support configuration changed both supervision and context eligibility, so it is not a one-factor causal ablation.}
\label{table:synthetic_attempts}
\begin{tabular}{llrr}
\toprule
Configuration & Length & Fast query accuracy \% & Full query accuracy \% \\
\midrule
1k pilot & 256 & 12.16 $\pm$ 1.02 & 11.95 $\pm$ 1.36 \\
1k pilot & 384 & 8.79 $\pm$ 0.74 & 8.20 $\pm$ 0.74 \\
6k revision & 256 & 13.51 $\pm$ 0.45 & 13.46 $\pm$ 0.57 \\
6k revision & 384 & 12.73 $\pm$ 1.80 & 12.70 $\pm$ 1.76 \\
Support configuration & 256 & 12.74 $\pm$ 0.59 & 100.00 $\pm$ 0.00 \\
Support configuration & 384 & 11.85 $\pm$ 0.31 & 99.25 $\pm$ 0.46 \\
\bottomrule
\end{tabular}
\end{table}

\begin{figure}[H]
\centering
\includegraphics[width=\linewidth]{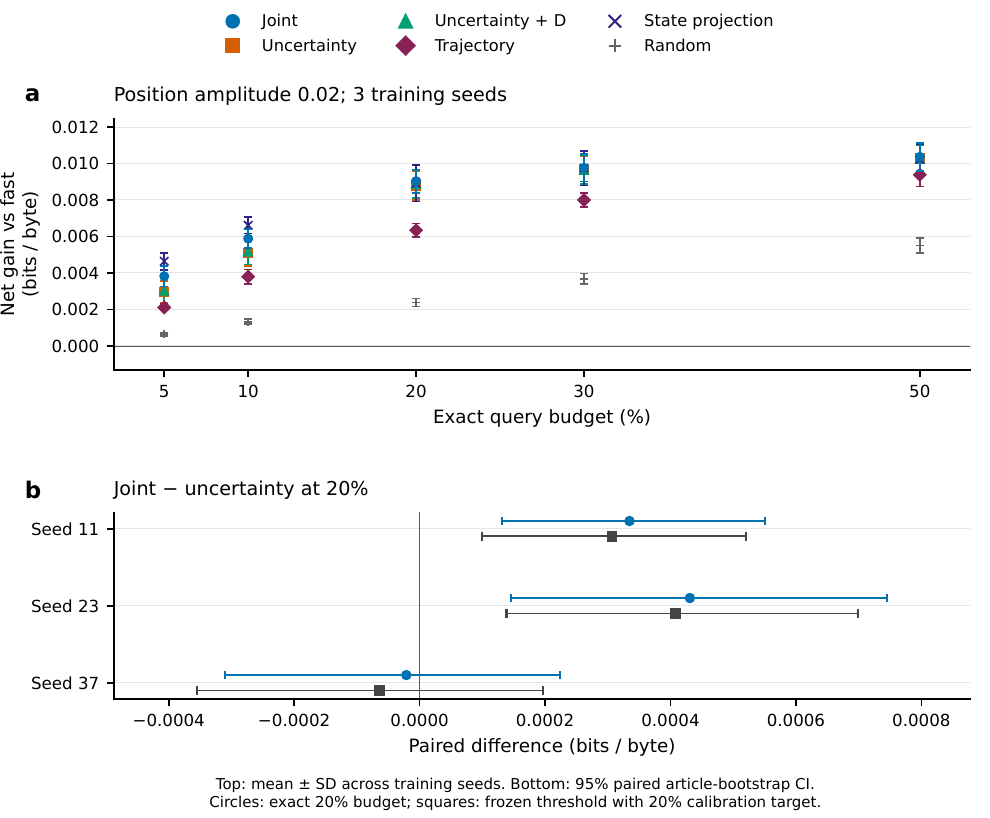}
\caption{Outcome-triggered balanced-position sensitivity probe. All three seeds and the same gate/threshold distinction are shown. The reduced mean incremental gain is a material qualification of the original configuration, not an omitted failure.}
\label{fig:balanced}
\end{figure}

\begin{figure}[H]
\centering
\includegraphics[width=\linewidth]{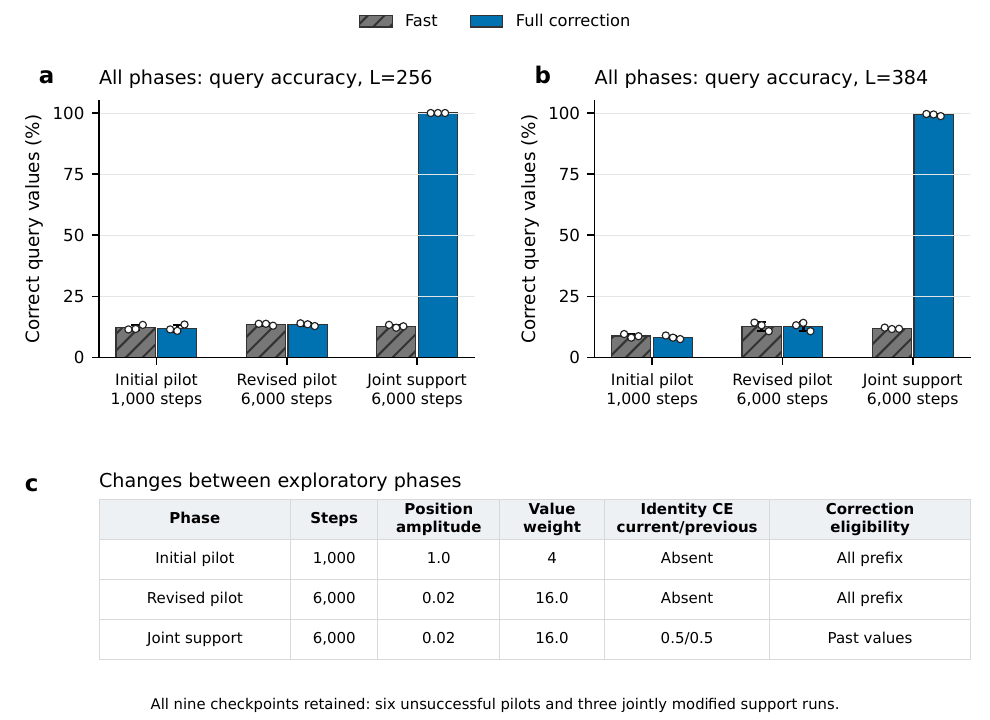}
\caption{All nine synthetic checkpoints and their recorded training configurations. Initial and longer attempts failed associative recall. The support configuration jointly changed causal reconstruction supervision and value eligibility and is explicitly exploratory. No unsuccessful seed or configuration is discarded.}
\label{fig:synthetic_attempts}
\end{figure}

\begin{figure}[H]
\centering
\includegraphics[width=\linewidth]{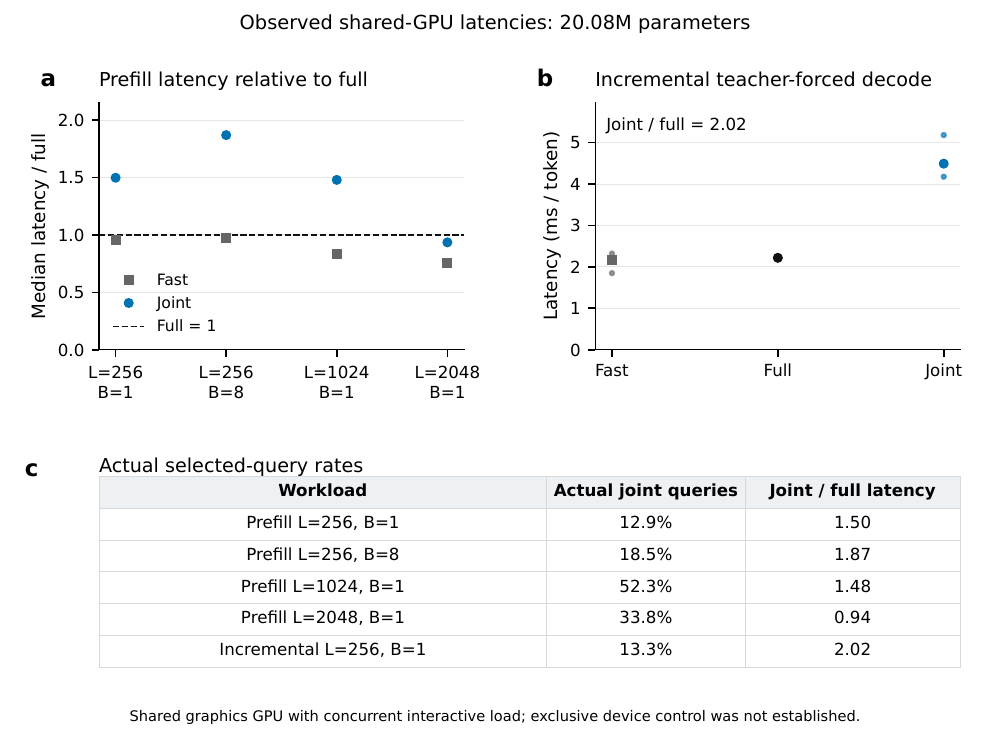}
\caption{Additional 20.08M implementation timing under shared graphics-device load. These observations retain unfavorable ratios and are not isolated-device or quality-matched deployment benchmarks.}
\label{fig:scale_timing}
\end{figure}

\section{Executed quality and systems measurements}
\subsection{Qwen3.5: cached prompt1024 forced64}
Recorded status: complete. The native automatic backend and MATH numerical reference retain separate executed losses.
\begin{table}[H]\centering\small
\caption{Actual executed policies, fixed fitting seed 901. NLL uses the same 30 timing-sample blocks; it is not the complete-roster sequence-quality endpoint. Median and p95 are whole-region milliseconds. Ratio is native automatic full median divided by policy median; values above one favor the policy. Block intervals quantify repeated timing under recorded shared-desktop conditions, not unseen devices or free-generation performance.}
\begin{tabular}{lrrrr}\toprule Policy & Median & p95 & Executed NLL & Native speed ratio [95\%]\\\midrule
native\_auto\_full & 1478.9 & 1515.9 & 2.5589 & 1.000 [1.000, 1.000] \\
matched\_math\_full & 1462.0 & 1524.8 & 2.5589 & 1.012 [0.987, 1.021] \\
true\_bypass & 1475.5 & 1563.2 & 2.7064 & 1.002 [0.976, 1.017] \\
random\_quota & 1473.8 & 1572.2 & 2.6808 & 1.003 [0.986, 1.013] \\
I\_T32 & 1514.2 & 1571.8 & 2.6759 & 0.977 [0.951, 0.987] \\
I\_S35 & 1516.3 & 1556.8 & 2.6648 & 0.975 [0.958, 0.989] \\
I\_ST32 & 1528.3 & 1574.1 & 2.6633 & 0.968 [0.945, 0.985] \\
C48 & 1649.1 & 1754.8 & 2.6677 & 0.897 [0.879, 0.912] \\
J32 & 1654.8 & 1801.4 & 2.6656 & 0.894 [0.872, 0.909] \\
S32\_T & 1671.6 & 1716.2 & 2.6623 & 0.885 [0.872, 0.902] \\
\bottomrule\end{tabular}\end{table}
\subsection{Qwen3.5: cached prompt512 forced64}
Recorded status: complete. The native automatic backend and MATH numerical reference retain separate executed losses.
\begin{table}[H]\centering\small
\caption{Actual executed policies, fixed fitting seed 901. NLL uses the same 30 timing-sample blocks; it is not the complete-roster sequence-quality endpoint. Median and p95 are whole-region milliseconds. Ratio is native automatic full median divided by policy median; values above one favor the policy. Block intervals quantify repeated timing under recorded shared-desktop conditions, not unseen devices or free-generation performance.}
\begin{tabular}{lrrrr}\toprule Policy & Median & p95 & Executed NLL & Native speed ratio [95\%]\\\midrule
native\_auto\_full & 1475.9 & 1547.0 & 2.5242 & 1.000 [1.000, 1.000] \\
matched\_math\_full & 1454.5 & 1503.2 & 2.5242 & 1.015 [0.999, 1.025] \\
true\_bypass & 1470.4 & 1547.1 & 2.6980 & 1.004 [0.988, 1.024] \\
random\_quota & 1452.7 & 1536.3 & 2.6592 & 1.016 [1.003, 1.024] \\
I\_T32 & 1497.9 & 1556.9 & 2.6605 & 0.985 [0.967, 0.995] \\
I\_S35 & 1503.3 & 1592.0 & 2.6471 & 0.982 [0.966, 0.997] \\
I\_ST32 & 1522.9 & 1559.6 & 2.6480 & 0.969 [0.956, 0.983] \\
C48 & 1613.9 & 1669.6 & 2.6505 & 0.914 [0.898, 0.923] \\
J32 & 1625.4 & 1672.1 & 2.6446 & 0.908 [0.890, 0.918] \\
S32\_T & 1628.4 & 1741.0 & 2.6421 & 0.906 [0.884, 0.915] \\
\bottomrule\end{tabular}\end{table}
\subsection{Qwen3.5: scoring 1024}
Recorded status: complete. The native automatic backend and MATH numerical reference retain separate executed losses.
\begin{table}[H]\centering\small
\caption{Actual executed policies, fixed fitting seed 901. Scoring NLL uses the complete locked roster; latency uses the first 30 rows. Median and p95 are whole-region milliseconds. Ratio is native automatic full median divided by policy median; values above one favor the policy. Block intervals quantify repeated timing under recorded shared-desktop conditions, not unseen devices or free-generation performance.}
\begin{tabular}{lrrrr}\toprule Policy & Median & p95 & Executed NLL & Native speed ratio [95\%]\\\midrule
native\_auto\_full & 192.4 & 210.5 & 2.4973 & 1.000 [1.000, 1.000] \\
matched\_math\_full & 193.2 & 207.0 & 2.4973 & 0.996 [0.988, 1.006] \\
true\_bypass & 190.4 & 206.9 & 2.6602 & 1.010 [1.001, 1.019] \\
random\_quota & 191.0 & 217.2 & 2.6288 & 1.007 [0.999, 1.014] \\
I\_T32 & 193.9 & 209.2 & 2.6248 & 0.992 [0.989, 1.000] \\
I\_S35 & 191.2 & 211.2 & 2.6064 & 1.006 [0.994, 1.012] \\
I\_ST32 & 193.8 & 209.5 & 2.6067 & 0.993 [0.987, 1.006] \\
C48 & 230.4 & 258.5 & 2.6154 & 0.835 [0.832, 0.841] \\
J32 & 229.3 & 248.4 & 2.6139 & 0.839 [0.833, 0.844] \\
S32\_T & 229.5 & 251.0 & 2.6037 & 0.838 [0.827, 0.843] \\
\bottomrule\end{tabular}\end{table}
\subsection{Qwen3.5: scoring 4096}
Recorded status: complete. The native automatic backend and MATH numerical reference retain separate executed losses.
\begin{table}[H]\centering\small
\caption{Actual executed policies, fixed fitting seed 901. Scoring NLL uses the complete locked roster; latency uses the first 30 rows. Median and p95 are whole-region milliseconds. Ratio is native automatic full median divided by policy median; values above one favor the policy. Block intervals quantify repeated timing under recorded shared-desktop conditions, not unseen devices or free-generation performance.}
\begin{tabular}{lrrrr}\toprule Policy & Median & p95 & Executed NLL & Native speed ratio [95\%]\\\midrule
native\_auto\_full & 975.8 & 1066.9 & 2.3863 & 1.000 [1.000, 1.000] \\
matched\_math\_full & 996.2 & 1056.3 & 2.3863 & 0.980 [0.953, 1.017] \\
true\_bypass & 938.2 & 1016.7 & 2.5443 & 1.040 [1.035, 1.062] \\
random\_quota & 949.1 & 1038.9 & 2.5127 & 1.028 [0.988, 1.049] \\
I\_T32 & 949.8 & 1074.7 & 2.5113 & 1.027 [1.020, 1.048] \\
I\_S35 & 947.9 & 1002.1 & 2.4965 & 1.029 [1.024, 1.050] \\
I\_ST32 & 953.5 & 1018.2 & 2.4965 & 1.023 [0.999, 1.045] \\
C48 & 1129.0 & 1248.4 & 2.5007 & 0.864 [0.847, 0.894] \\
J32 & 1101.5 & 1150.5 & 2.4982 & 0.886 [0.879, 0.905] \\
S32\_T & 1103.8 & 1205.0 & 2.4983 & 0.884 [0.861, 0.904] \\
\bottomrule\end{tabular}\end{table}
\subsection{Qwen3.5: scoring 512}
Recorded status: complete. The native automatic backend and MATH numerical reference retain separate executed losses.
\begin{table}[H]\centering\small
\caption{Actual executed policies, fixed fitting seed 901. Scoring NLL uses the complete locked roster; latency uses the first 30 rows. Median and p95 are whole-region milliseconds. Ratio is native automatic full median divided by policy median; values above one favor the policy. Block intervals quantify repeated timing under recorded shared-desktop conditions, not unseen devices or free-generation performance.}
\begin{tabular}{lrrrr}\toprule Policy & Median & p95 & Executed NLL & Native speed ratio [95\%]\\\midrule
native\_auto\_full & 96.2 & 105.1 & 2.5703 & 1.000 [1.000, 1.000] \\
matched\_math\_full & 96.1 & 105.5 & 2.5703 & 1.001 [0.989, 1.017] \\
true\_bypass & 95.1 & 102.6 & 2.7364 & 1.012 [0.998, 1.028] \\
random\_quota & 95.9 & 105.2 & 2.7024 & 1.003 [0.986, 1.018] \\
I\_T32 & 96.3 & 102.7 & 2.6991 & 1.000 [0.994, 1.014] \\
I\_S35 & 96.4 & 104.9 & 2.6840 & 0.998 [0.992, 1.014] \\
I\_ST32 & 96.8 & 106.1 & 2.6829 & 0.994 [0.985, 1.009] \\
C48 & 115.8 & 125.5 & 2.6922 & 0.831 [0.822, 0.844] \\
J32 & 116.3 & 126.8 & 2.6914 & 0.827 [0.815, 0.837] \\
S32\_T & 115.7 & 123.8 & 2.6808 & 0.831 [0.821, 0.844] \\
\bottomrule\end{tabular}\end{table}
\subsection{SmolLM3: cached prompt1024 forced64}
Recorded status: complete. The native automatic backend and MATH numerical reference retain separate executed losses.
\begin{table}[H]\centering\small
\caption{Actual executed policies, fixed fitting seed 901. NLL uses the same 30 timing-sample blocks; it is not the complete-roster sequence-quality endpoint. Median and p95 are whole-region milliseconds. Ratio is native automatic full median divided by policy median; values above one favor the policy. Block intervals quantify repeated timing under recorded shared-desktop conditions, not unseen devices or free-generation performance.}
\begin{tabular}{lrrrr}\toprule Policy & Median & p95 & Executed NLL & Native speed ratio [95\%]\\\midrule
native\_auto\_full & 1060.0 & 1100.4 & 2.5742 & 1.000 [1.000, 1.000] \\
matched\_math\_full & 1060.8 & 1114.2 & 2.5742 & 0.999 [0.983, 1.014] \\
true\_bypass & 1072.2 & 1153.5 & 2.6103 & 0.989 [0.970, 1.002] \\
random\_quota & 1077.5 & 1159.3 & 2.6043 & 0.984 [0.969, 0.999] \\
I\_T32 & 1114.6 & 1156.2 & 2.6080 & 0.951 [0.934, 0.965] \\
I\_S35 & 1110.5 & 1147.4 & 2.5893 & 0.955 [0.937, 0.974] \\
I\_ST32 & 1129.7 & 1179.0 & 2.5842 & 0.938 [0.928, 0.956] \\
C48 & 1176.1 & 1263.2 & 2.6080 & 0.901 [0.888, 0.915] \\
J32 & 1180.4 & 1233.9 & 2.6081 & 0.898 [0.883, 0.912] \\
S32\_T & 1185.5 & 1271.6 & 2.5983 & 0.894 [0.882, 0.911] \\
\bottomrule\end{tabular}\end{table}
\subsection{SmolLM3: cached prompt512 forced64}
Recorded status: complete. The native automatic backend and MATH numerical reference retain separate executed losses.
\begin{table}[H]\centering\small
\caption{Actual executed policies, fixed fitting seed 901. NLL uses the same 30 timing-sample blocks; it is not the complete-roster sequence-quality endpoint. Median and p95 are whole-region milliseconds. Ratio is native automatic full median divided by policy median; values above one favor the policy. Block intervals quantify repeated timing under recorded shared-desktop conditions, not unseen devices or free-generation performance.}
\begin{tabular}{lrrrr}\toprule Policy & Median & p95 & Executed NLL & Native speed ratio [95\%]\\\midrule
native\_auto\_full & 1049.2 & 1136.4 & 2.4609 & 1.000 [1.000, 1.000] \\
matched\_math\_full & 1047.3 & 1091.0 & 2.4609 & 1.002 [0.982, 1.029] \\
true\_bypass & 1066.9 & 1113.6 & 2.4961 & 0.983 [0.975, 1.016] \\
random\_quota & 1062.3 & 1114.9 & 2.4878 & 0.988 [0.973, 1.016] \\
I\_T32 & 1105.1 & 1153.0 & 2.4884 & 0.949 [0.941, 0.977] \\
I\_S35 & 1089.9 & 1138.3 & 2.4675 & 0.963 [0.956, 0.990] \\
I\_ST32 & 1104.5 & 1137.4 & 2.4682 & 0.950 [0.938, 0.975] \\
C48 & 1153.0 & 1211.1 & 2.4860 & 0.910 [0.904, 0.936] \\
J32 & 1174.5 & 1206.3 & 2.4843 & 0.893 [0.874, 0.923] \\
S32\_T & 1172.4 & 1207.3 & 2.4700 & 0.895 [0.877, 0.921] \\
\bottomrule\end{tabular}\end{table}
\subsection{SmolLM3: scoring 1024}
Recorded status: partial\_failures\_retained. The native automatic backend and MATH numerical reference retain separate executed losses.
Failed arms remain excluded from timing/quality interpretation: I\_S35.
\begin{table}[H]\centering\small
\caption{Actual executed policies, fixed fitting seed 901. Scoring NLL uses the complete locked roster; latency uses the first 30 rows. Median and p95 are whole-region milliseconds. Ratio is native automatic full median divided by policy median; values above one favor the policy. Block intervals quantify repeated timing under recorded shared-desktop conditions, not unseen devices or free-generation performance.}
\begin{tabular}{lrrrr}\toprule Policy & Median & p95 & Executed NLL & Native speed ratio [95\%]\\\midrule
native\_auto\_full & 134.6 & 146.2 & 2.4734 & 1.000 [1.000, 1.000] \\
matched\_math\_full & 134.7 & 142.7 & 2.4734 & 0.999 [0.989, 1.016] \\
true\_bypass & 132.9 & 145.2 & 2.5041 & 1.013 [1.006, 1.028] \\
random\_quota & 134.2 & 141.2 & 2.4978 & 1.004 [1.000, 1.019] \\
I\_T32 & 135.5 & 149.0 & 2.4947 & 0.993 [0.978, 1.010] \\
I\_ST32 & 136.0 & 148.5 & 2.4808 & 0.990 [0.978, 1.011] \\
C48 & 152.5 & 165.1 & 2.4926 & 0.883 [0.874, 0.897] \\
J32 & 152.1 & 157.7 & 2.4920 & 0.885 [0.879, 0.899] \\
S32\_T & 152.1 & 164.9 & 2.4805 & 0.885 [0.875, 0.901] \\
\bottomrule\end{tabular}\end{table}
\subsection{SmolLM3: scoring 4096}
Recorded status: complete. The native automatic backend and MATH numerical reference retain separate executed losses.
\begin{table}[H]\centering\small
\caption{Actual executed policies, fixed fitting seed 901. Scoring NLL uses the complete locked roster; latency uses the first 30 rows. Median and p95 are whole-region milliseconds. Ratio is native automatic full median divided by policy median; values above one favor the policy. Block intervals quantify repeated timing under recorded shared-desktop conditions, not unseen devices or free-generation performance.}
\begin{tabular}{lrrrr}\toprule Policy & Median & p95 & Executed NLL & Native speed ratio [95\%]\\\midrule
native\_auto\_full & 1162.5 & 1255.6 & 2.3695 & 1.000 [1.000, 1.000] \\
matched\_math\_full & 1128.4 & 1171.6 & 2.3695 & 1.030 [1.002, 1.047] \\
true\_bypass & 1112.4 & 1158.4 & 2.3995 & 1.045 [1.015, 1.063] \\
random\_quota & 1120.4 & 1200.4 & 2.3935 & 1.038 [1.001, 1.057] \\
I\_T32 & 1118.0 & 1225.2 & 2.3930 & 1.040 [0.984, 1.059] \\
I\_S35 & 1114.9 & 1194.8 & 2.3787 & 1.043 [1.013, 1.060] \\
I\_ST32 & 1116.4 & 1264.6 & 2.3771 & 1.041 [1.002, 1.060] \\
C48 & 1183.6 & 1308.2 & 2.3923 & 0.982 [0.931, 1.000] \\
J32 & 1183.1 & 1261.2 & 2.3921 & 0.983 [0.955, 0.998] \\
S32\_T & 1184.1 & 1272.2 & 2.3843 & 0.982 [0.939, 0.997] \\
\bottomrule\end{tabular}\end{table}
\subsection{SmolLM3: scoring 512}
Recorded status: complete. The native automatic backend and MATH numerical reference retain separate executed losses.
\begin{table}[H]\centering\small
\caption{Actual executed policies, fixed fitting seed 901. Scoring NLL uses the complete locked roster; latency uses the first 30 rows. Median and p95 are whole-region milliseconds. Ratio is native automatic full median divided by policy median; values above one favor the policy. Block intervals quantify repeated timing under recorded shared-desktop conditions, not unseen devices or free-generation performance.}
\begin{tabular}{lrrrr}\toprule Policy & Median & p95 & Executed NLL & Native speed ratio [95\%]\\\midrule
native\_auto\_full & 61.3 & 68.0 & 2.5433 & 1.000 [1.000, 1.000] \\
matched\_math\_full & 61.4 & 68.7 & 2.5433 & 0.998 [0.961, 1.014] \\
true\_bypass & 60.9 & 69.1 & 2.5814 & 1.006 [0.991, 1.031] \\
random\_quota & 61.6 & 69.8 & 2.5739 & 0.994 [0.969, 1.006] \\
I\_T32 & 62.5 & 70.6 & 2.5678 & 0.980 [0.957, 1.002] \\
I\_S35 & 61.9 & 70.1 & 2.5506 & 0.989 [0.964, 1.006] \\
I\_ST32 & 62.6 & 69.6 & 2.5520 & 0.979 [0.948, 0.998] \\
C48 & 71.4 & 77.9 & 2.5651 & 0.858 [0.843, 0.883] \\
J32 & 71.0 & 81.2 & 2.5647 & 0.863 [0.846, 0.884] \\
S32\_T & 70.4 & 79.8 & 2.5512 & 0.870 [0.851, 0.890] \\
\bottomrule\end{tabular}\end{table}
\section{Completed measurement inventory}
All six scientific test banks completed after the all-family seal. Systems conditions attempted all ten planned policies. Failed policy checks are retained and are excluded from timing and quality claims; every valid condition uses thirty retained paired blocks. The original strict-telemetry CLI failed before timing under WDDM, followed by the documented pre-measurement platform adaptation.
\begin{table}[H]\centering\small
\caption{Realized systems coverage. Scoring quality uses 200 core rows at 512 or 99 midpoint rows at longer lengths; cached quality uses the same 30 timed source rows. No projection from dense branch losses substitutes for executed policy quality.}
\begin{tabular}{llrrl}\toprule Model & Condition & Valid policies & Blocks & Failed policies\\\midrule
SmolLM3 & scoring 512 & 10 & 30 & None \\
SmolLM3 & scoring 1024 & 9 & 30 & I\_S35 \\
SmolLM3 & scoring 4096 & 10 & 30 & None \\
SmolLM3 & cached 512 & 10 & 30 & None \\
SmolLM3 & cached 1024 & 10 & 30 & None \\
Qwen3.5 & scoring 512 & 10 & 30 & None \\
Qwen3.5 & scoring 1024 & 10 & 30 & None \\
Qwen3.5 & scoring 4096 & 10 & 30 & None \\
Qwen3.5 & cached 512 & 10 & 30 & None \\
Qwen3.5 & cached 1024 & 10 & 30 & None \\
\bottomrule\end{tabular}\end{table}

\end{document}